\algrenewcommand\algorithmicindent{1.2em}
\algrenewcommand\textproc{}%
\algrenewcommand\algorithmicrequire{\textbf{inputs:}}%
\algrenewcommand\algorithmicensure{\textbf{outputs:}}%
\NewCommandCopy{\Oalgorithmiccomment}{\algorithmiccomment}%
\renewcommand{\algorithmiccomment}[1]{\small{\Oalgorithmiccomment{#1}}}%
\newtheorem{lemma}{Lemma}
\newcommand{\bunderbrace}[2]{%
\begin{array}[t]{@{}c@{}}
\underbrace{#1} \\
\mathclap{#2}
\end{array}
}
\definecolor{easy_blue}{RGB}{0,98,163}
\definecolor{easy_red}{RGB}{150,60,0}
\definecolor{easy_blue_bg}{RGB}{234,241,248}
\definecolor{easy_red_bg}{RGB}{253,246,235}
\definecolor{blue_bg}{RGB}{0,61,86}
\crefname{equation}{}{}
\Crefname{assumption}{Assumption}{Assumptions}
\NewDocumentCommand{\G}{e{_^}}{%
\mathop{}\!%
\nabla{}
\IfValueT{#1}{_{\!#1}}%
\IfValueT{#2}{^{#2}}%
}
\newcommand{\blank}{{\mspace{2mu}\cdot\mspace{2mu}}}
\newcommand{\argmin}{\operatorname*{arg\,min}}
\newcommand*{\tr}{^{\mkern-1.5mu\mathsf{T}}}%
\newcommand*{\hc}{^{\mathsf{H}}}%
\newcommand{\E}{\operatorname*{\mathbb{E}}}
\newcommand{\V}{\operatorname*{\mathbb{V}}}
\newcommand{\C}{\operatorname*{\mathbb{C}}}
\newcommand{\ind}{\operatorname*{\perp\!\!\!\perp}}
\newcommand{\cf}{\mathtt{f}}
\newcommand{\cb}{\mathtt{b}}
\newcommand{\cpf}{\mathtt{πf}}
\newcommand{\cpb}{\mathtt{πb}}
\newcommand{\ct}{\mathtt{t}}
\newcommand{\cl}{\mathtt{l}}
\newcommand{\TO}{\mathbin{‖}}
\newcommand{\fms}[2]{$\underset{\textcolor{gray}{#2}}{#1}$}
\newcommand{\feq}{\!=\!}
\newcommand{\ra}[1]{\renewcommand{\arraystretch}{#1}}
\newcommand{\bo}{\boldmath}
\title{BM$^2$: Coupled Schrödinger Bridge Matching}
\author{\name Stefano Peluchetti \email stepelu@sakana.ai \\
      Sakana AI}
\begin{document}

\maketitle

\begin{abstract}
A Schrödinger bridge establishes a dynamic transport map between two target distributions via a reference process, simultaneously solving an associated entropic optimal transport problem.
We consider the setting where samples from the target distributions are available, and the reference diffusion process admits tractable dynamics.
We thus introduce Coupled Bridge Matching (BM$^2$), a simple \emph{non-iterative} approach for learning Schrödinger bridges with neural networks.
A preliminary theoretical analysis of the convergence properties of BM$^2$ is carried out, supported by numerical experiments that demonstrate the effectiveness of our proposal.
\end{abstract}

\section{Introduction}\label{sec:intro}

The Schrödinger bridge problem seeks a process, the Schrödinger bridge, with prescribed initial and terminal distributions, such that the distribution of the Schrödinger bridge minimizes the Kullback-Leibler (KL) divergence to the distribution of a reference process.
Schrödinger bridges play a central role in measure transport theory \citep{marzouk2016sampling}.
Notably, it is known that the initial-terminal distribution of a Schrödinger bridge provides a solution to a corresponding entropic optimal transport problem \citep{peyre2020computational}.
Schrödinger bridges thus provide an effective framework for finding an alignment between samples from two target distributions.
Furthermore, diffusion-based generative models \citep{ho2020denoising,song2021scorebased} can be interpreted as solving trivial instances of the Schrödinger bridge problem \citep{peluchetti2023diffusion}.
Consequently, Schrödinger bridges offer a more general approach to contemporary generative applications.

We consider the setting where samples are readily available from both target distributions, and where the reference process is a diffusion process solution to a stochastic differential equation (SDE).
We thus introduce Coupled Bridge Matching (BM$^2$), a novel methodology aimed at computing the Schrödinger bridge given the reference SDE and samples from the two marginal distributions of interest.
BM$^2$ builds upon Bridge Matching (BM), introduced\footnote{\citet{peluchetti2021nondenoising} used the term ``Diffusion Bridge Mixture-Matching Transport'' (DBMT), but we follow \citet{shi2023diffusiona} in using the sleeker nomenclature ``Bridge Matching'' for this transport.} by \citet{peluchetti2021nondenoising}.
Our approach advances recent contributions by \citet{peluchetti2023diffusion,shi2023diffusiona} by removing the need to solve a sequence of optimization problems.
A neural network is employed to jointly learn a forward drift function and a backward drift function corresponding to the forward and backward dynamics of a Schrödinger bridge.
BM$^2$ achieves several key desiderata:
\begin{enumerate}[label= (\roman*)]
\item non-iterative: training is conducted through standard stochastic gradient descent within a single optimization loop;
\item exact: the idealized version of BM$^2$ yields the target Schrödinger bridge without approximations; the only sources of error involved in its practical implementation are the neural network approximation error and the discretization error due to sampling the learned SDE;\@
\item efficient gradient: the gradient of the loss function with respect to neural network parameters depends solely on few random variables sampled at the current optimization step;
\item simple loss: the loss function avoids derivative terms with respect to neural network inputs and does not impose hard constraints (such as conservative vector field requirements) on the neural network approximator.
\end{enumerate}
These features collectively enhance the efficiency and applicability of BM$^2$ in solving Schrödinger bridge problems.
Training is robust, as it does not depend on hyperparameters that are typically challenging to set without time-consuming pilot runs, such as the number of training steps per optimization iteration (i) or the level of approximation (ii).
Moreover, the memory requirements are modest due to (iii).
Finally, the implementation is straightforward (i, iv), as illustrated in \cref{alg:bm2_obj,alg:bm2_train} and in the annotated PyTorch code of \cref{code:bm2}.

\textbf{Content}: This paper is structured as follows.
In \cref{sec:reference_intro}, we formally introduce the Schrödinger bridge problem with associated reference process dynamics.
\cref{sec:bm} reviews Bridge Matching, while \cref{sec:bm2} introduces Coupled Bridge Matching, discussing its theoretical properties and implementation aspects.
Numerical experiments are presented in \cref{sec:numerical}, followed by a discussion of related works in \cref{sec:related_works}.
\cref{sec:conclusions} concludes the paper.
For clarity, a more general formulation of BM$^2$ is deferred to \cref{sec:additional_dynamics}, all proofs to \cref{sec:proofs}, an additional numerical experiment to \cref{sec:additional_results}, and code listings to \cref{sec:code}.

\textbf{Notation and Assumptions}: To enhance accessibility, we refrain from discussing the more technical aspects related to the Schrödinger bridge problem in its path measure formulation.
The excellent treaties of \citet{leonard2014survey,leonard2014properties} and \citet[Appendices D, H]{bortoli2021diffusion} already serve this goal.
We denote distributions with uppercase letters and their corresponding (Lebesgue) densities with lowercase letters.
All stochastic processes considered are $d$-dimensional, continuous, and defined on the unit time interval $[0,1]$.
For a stochastic process $X$ with distribution $P$ (denoted $X ∼ P$), we use subscripts to specify marginal distributions, joint distributions, and conditional distributions of $P$.
$P_t$: marginal distribution of $X_t$ at time $t$, with density $p_t$; $P_{0,1}$: initial-terminal joint distribution of $(X_0, X_1)$; $P_{{}|0}$: distribution of $X$ given its initial value $X_0$.
Superscripts indicate a distribution $P$'s dependency on another distribution $Z$, as in $P^Z$, or a sequence of distributions, as in $P^{(i)}$, $i ≥ 1$.
For a $d$-dimensional distribution $Q_0$, we define the stochastic process mixture distribution $Q_0 P_{{}|0}$ as: $(Q_0P_{{}|0})(X ∈ □) ≔ ∫P_{{}|0}(X ∈ □|x_0)Q_0(d x_0)$.
From a generative perspective, $X ∼ Q_0 P_{{}|0}$ is obtained by sampling $X_0 ∼ Q_0$ and then $X ∼ P_{{}|0}(□|X_0)$ conditionally on $X_0$.
The marginal-conditional decomposition of $P$ over its initial value is thus $P = P_0 P_{{}|0}$.
Similarly, for a $d{×}d$-dimensional joint distribution $Q_{0,1}$, we define $Q_{0,1} P_{{}|0,1}$ such that $X \sim Q_{0,1} P_{{}|0,1}$ is obtained by sampling $(X_0,X_1) ∼ Q_{0,1}$ and then $X ∼ P_{{}|0,1}(□|X_0,X_1)$ conditionally on $X_0$ and $X_1$.
Time is always indexed on a common forward timescale, on which all stochastic processes' distributions are defined.
The dynamics of a diffusion process $X ∼ P$ can be formulated in both forward and backward time directions, through corresponding forward and backward SDEs.
In backward SDEs, $t$ decreases from $1$ to $0$ ($dt$ is negative), which is denoted by $t∈[1,0]$.
All Brownian motions are independent.
Unless otherwise noted, each diffusion process is a Markov diffusion process which is a (weak) solution to an associated SDE.\@

\section{Problem Setting}\label{sec:reference_intro}

\subsection{Schrödinger Bridges and Entropic Optimal Transport}\label{sec:reference_sb}

For two target $d$-dimensional distributions $Ψ_0$ and $Ψ_1$, and a reference stochastic process distribution $R$, the \emph{dynamic} Schrödinger bridge (SB) problem seeks to find
\begin{equation}\label{eq:sb_problem}
S^{Ψ_0,Ψ_1,R} ≔ \argmin_{P ∈ 𝒫(Ψ_0,Ψ_1)}𝕂𝕃(P \TO R),
\end{equation}
where $𝕂𝕃(□ \TO □)$ is the KL divergence and $𝒫(Ψ_0,Ψ_1)$ is the class of distributions of stochastic processes having initial distribution $Ψ_0$ and terminal distribution $Ψ_1$.
We narrow down \cref{eq:sb_problem} to the case where $R$ is the distribution of a diffusion process.
In this case, under suitable conditions \citep{leonard2014survey}, \cref{eq:sb_problem} admits a unique solution which is also a diffusion process.
From this point forward, $Ψ_0$, $Ψ_1$ and $R$ are considered fixed.
For brevity, we will thus denote the Schrödinger bridge $S^{Ψ_0,Ψ_1,R}$ simply as $S$, and apply the same notation convention to any distribution dependent on these variables.

The forward and backward dynamics of $X ∼ S$ are given by:
\begin{align}
 & X_0 ∼ Ψ_0,\quad dX_t = μ_s(X_t,t)dt + σdW_t,\quad t∈[0,1],\tag{$S$}\label{eq:sb_fwd}                  \\
 & X_1 ∼ Ψ_1,\quad dX_t = -υ_s(X_t,t)dt + σdW_t,\quad t∈[1,0],\tag{$\overleftarrow{S}$}\label{eq:sb_bwd}
\end{align}
for the SB-optimal drift functions $μ_s, υ_s$.
These functions are related to the Schrödinger potentials \citep{leonard2014survey} and are not analytically available aside from very specific choices of $Ψ_0,Ψ_1$ and $R$.

We assume that $R_{0,1}$ admits density $r_{0,1}$.
Once $S$ is obtained, the solution to the \emph{static} Schrödinger bridge problem is given by $S_{0,1}$:
\begin{equation}\begin{aligned}\label{eq:sb_static_problem}
S_{0,1} & = \argmin_{C_{0,1} ∈ 𝒞(Ψ_0,Ψ_1)}𝕂𝕃(C_{0,1} \TO R_{0,1}),                          \\
        & = \argmin_{C_{0,1} ∈ 𝒞(Ψ_0,Ψ_1)}𝔼_{C_{0,1}}[-\log r_{1|0}(X_1|X_0)] - ℍ(C_{0,1}).
\end{aligned}\end{equation}
In \cref{eq:sb_static_problem}, $𝒞(Ψ_0,Ψ_1)$ denotes the class of $d{×}d$-dimensional joint distributions with marginal distributions $Ψ_0$ and $Ψ_1$, commonly referred to as the class of couplings of $Ψ_0$ and $Ψ_1$, and $ℍ(C_{0,1}) ≔ 𝔼_{C_{0,1}}[-\log c_{1,0}(X_1,X_0)]$ is the entropy of $C_{0,1}$.

The entropic optimal transport (EOT) solution for the cost function $k(x_0,x_1)$ and regularization level $ε$ is given by:
\begin{equation}\label{eq:eot_problem}
E_{0,1} ≔ \argmin_{C_{0,1} ∈ 𝒞(Ψ_0,Ψ_1)} 𝔼_{C_{0,1}}[κ(X_1,X_0)] - εℍ(C_{0,1}).
\end{equation}
Thus, for each choice of $R_{0,1}$ in \cref{eq:sb_static_problem}, $S_{0,1}$ solves a corresponding problem \cref{eq:eot_problem}.
As in the following, when $R$ is associated to \cref{eq:ref_sde}, $S_{0,1}$ solves the EOT problem \cref{eq:eot_problem} for the Euclidean cost $κ(x_1,x_0) = \nicefrac{1}{2}‖x_0 - x_1‖^2$ and regularization level $ε = σ^2$.

We refer to \citet{peyre2020computational,leonard2014survey,gushchin2023building} for related background material from complementary perspectives.

\subsection{Reference Dynamics}\label{sec:reference_dynamics}

We focus on the case where $R$ is the distribution of a scaled Brownian motion:
\begin{equation}\label{eq:ref_sde}
X_0 ∼ Ψ_0,\quad dX_t = σ dW_t,\quad t∈[0,1],\tag{$R$}
\end{equation}
with $σ > 0$.
Our approach \emph{is not limited} to the choice of SDE \cref{eq:ref_sde}, BM$^2$ readily extends to the broader class of reference SDEs examined in \citet{peluchetti2023diffusion}.
The main requirement for the applicability of BM$^2$ is the analytical availability of \cref{eq:ref_sde_td,eq:ref_sde_tdd} for the chosen reference SDE.\@
We address the case, commonly employed in generative applications, of $dX_t = σ\sqrt{β_t}dW_t$ for a schedule $β_t$ explicitly in \cref{sec:additional_dynamics}, and refer the reader to \citet{peluchetti2021nondenoising,peluchetti2023diffusion} for the general setting.
As our developments are orthogonal to the specific choice of reference process, we focus on the simplest case for explanatory reasons.

We collect here various results concerning \cref{eq:ref_sde} that will be utilized in the following:
\begin{align}
 & R_{t|0}(□|x_0) = 𝒩(x_0,σ^2t),\label{eq:ref_sde_td}                                                                       \\
 & R_{t|0,1}(□|x_0,x_1) = 𝒩(x_0(1 - t) + x_1t, σ^2t(1 - t)),\label{eq:ref_sde_tdd}                                          \\
 & μ_{01}(x_t,t,x_1) ≔ σ^2 ∇_{x_t} \log r_{1|t}(x_1|x_t) = \frac{x_1 - x_t}{1 - t},\label{eq:μ01}                           \\
 & υ_{01}(x_t,t,x_0) ≔ σ^2∇_{x_t}\log r_{t|0}(x_t|x_0) = \frac{x_0 - x_t}{t},\label{eq:υ01}                                 \\
 & γ_{01}(x_t,t,x_0,x_1) ≔ σ^2∇_{x_t}\log r_{t|0,1}(x_t|x_0,x_1) = \frac{x_0(1 - t) + x_1 t - x_t}{t(1 - t)}.\label{eq:γ01}
\end{align}

Conditioning $X ∼ R$ on the endpoints $X_0=x_0$, $X_1=x_1$ results in the diffusion bridge distribution $R_{{}|0,1}$, with associated forward and backward SDEs:
\begin{align}
 & X_0=x_0, \quad dX_t = μ_\mathrm{01}(X_t,t,x_1)dt + σdW_t,\quad t∈[0,1],\tag{$R_{{}|0,1}$}\label{eq:sde_bridge}               \\
 & X_1=x_1, \quad dX_t = -υ_{01}(X_t,t,x_0)dt + σdW_t,\quad t∈[1,0].\tag{$\overleftarrow{R_{{}|0,1}}$}\label{eq:sde_bridge_rev}
\end{align}

\section{Bridge Matching (BM)}\label{sec:bm}

We succinctly review Bridge Matching, and refer to \citet{peluchetti2021nondenoising,peluchetti2023diffusion,shi2023diffusiona} for more details.
BM takes as input a joint distribution $Q_{0,1}$ with marginal distributions $Q_0,Q_1$ and a SDE, \cref{eq:ref_sde}.
Firstly, a stochastic process $Π^{Q_{0,1}}$ is constructed as a mixture of diffusion bridges \cref{eq:sde_bridge}, such that the endpoints $(X_0,X_1)$ of $X ∼ Π^{Q_{0,1}}$ are distributed according to $Q_{0,1}$.
This process, which is a mixture of diffusion processes, is not itself a diffusion process in general \citep{jamison1974reciprocal}.
However, we can obtain a marginal-matching diffusion process with distribution $M^{Q_{0,1}}$ for which $M^{Q_{0,1}}_t = Π^{Q_{0,1}}_t, 0 ≤ t ≤ 1$.
Consequently, $X ∼ M^{Q_{0,1}}$ is a diffusion process for which $X_0 ∼ Q_0$ and $X_1 ∼ Q_1$, i.e.\ it defines a dynamic transport from $Q_0$ to $Q_1$.

Concretely, let $Π^{Q_{0,1}} ≔ Q_{0,1}R_{{}|0,1}$.
The BM transport based on $Q_{0,1}$ with distribution $M^{Q_{0,1}}$ is realized by
\begin{align}
 & X_0 ∼ Q_0,\quad dX_t = \bunderbrace{μ_m^{Q_{0,1}}(X_t,t)}{𝔼_{Π^{Q_{0,1}}}[μ_{01}(X_t,t,X_1)|X_t]}dt + σdW_t,\quad t∈[0,1],\tag{$M$}\label{eq:bm_f_sde}                  \\
 & X_1 ∼ Q_1,\quad dX_t = -\bunderbrace{υ_m^{Q_{0,1}}(X_t,t)}{𝔼_{Π^{Q_{0,1}}}[υ_{01}(X_t,t,X_0)|X_t]}dt + σdW_t,\quad t∈[1,0],\tag{$\overleftarrow{M}$}\label{eq:bm_b_sde}
\end{align}
and satisfies $M^{Q_{0,1}}_t = Π^{Q_{0,1}}_t$, $0 ≤ t ≤ 1$.

As conditional expectations are mean squared error minimizers, suitable training objectives for the drift functions $μ_m^{Q_{0,1}}$ and $υ_m^{Q_{0,1}}$ are derived from
\begin{align}
 & μ_m^{Q_{0,1}} = \argmin_{μ} 𝔼_{Π^{Q_{0,1}}}\Big[\frac{1}{2}∫_0^1‖μ_{01}(X_t,t,X_1) - μ(X_t,t)‖^2dt\Big],\label{eq:bm_f_obj} \\
 & υ_m^{Q_{0,1}} = \argmin_{υ} 𝔼_{Π^{Q_{0,1}}}\Big[\frac{1}{2}∫_0^1‖υ_{01}(X_t,t,X_0) - υ(X_t,t)‖^2dt\Big],\label{eq:bm_b_obj}
\end{align}
by replacing each integral with an expectation over uniform time $t ∼ 𝒰(0, 1)$, and then approximating both expectations with Monte Carlo estimators.
While we will rely exclusively on \cref{eq:bm_f_obj,eq:bm_b_obj} in the experiments of \cref{sec:numerical}, $μ_m^{Q_{0,1}}$ and $υ_m^{Q_{0,1}}$ can be inferred from paths $X ∼ Π^{Q_{0,1}}$ also by performing maximum likelihood estimation or by employing a drift matching estimator \citep{liu2022let,peluchetti2023diffusion}.

We conclude this section by reviewing prior BM results relevant for BM$^2$.
Define:
\begin{equation*}\begin{aligned}
 & 𝒫 ≔ \{d\text{-dimensional, continuous, stochastic processes on }[0,1]\},                                   \\
 & ℛ ≔ \{P ∈ 𝒫 \mathrel{|} P = P_{0,1}R_{{}|0,1} = Π^{P_{0,1}}\text{ for some }P_{0,1}\},                     \\
 & ℳ ≔ \{P ∈ 𝒫 \mathrel{|} P\text{ is a (Markov) diffusion process}\},                                        \\
 & 𝒮 ≔ \{P ∈ 𝒫 \mathrel{|} P\text{ is a Schrödinger bridge for some target marginal distributions}\} = ℛ ∩ ℳ,
\end{aligned}\end{equation*}
where the equivalence is established by \citet{jamison1975markov} under appropriate assumptions.
We additionally define the following restrictions: $𝒫(Ψ_0,⋅) ≔ \{P ∈ 𝒫 \mathrel{|} P_0 = Ψ_0\}$, $𝒫(⋅,Ψ_1) ≔ \{P ∈ 𝒫 \mathrel{|} P_1 = Ψ_1\}$, $𝒫(Ψ_0,Ψ_1) ≔ \{P ∈ 𝒫 \mathrel{|} P_0 = Ψ_0 \text{ and } P_1 = Ψ_1\}$.
Restrictions to $ℛ,ℳ,𝒮$ and $𝒞$ employ the same notation.

For $Q ∈ 𝒫$, it is instructive to view BM as a map between distributions defined by the composition of two projections: $Q \overset{ℛp}{→} Π^{Q_{0,1}}\overset{ℳp}{→} M^{Q_{0,1}}$.
Here, the \emph{reciprocal projection} $ℛp: 𝒫 → ℛ$ projects $Q$ onto the reciprocal class $ℛ$, while the \emph{Markovian projection} $ℳp: ℛ → ℳ$ projects $Π^{Q_{0,1}}$ onto the class of diffusion processes, see \citet{shi2023diffusiona}.
It follows that if $P ∈ ℛ$, then $P = ℛp(P)$, and if $P ∈ ℳ$, then $P = ℳp(P)$.
Consequently, if $P ∈ 𝒮$, $P = (ℳp ○ ℛp)(P)$ for the BM map $(ℳp ○ ℛp)$, and conversely if $P = (ℳp ○ ℛp)(P)$ then $P ∈ 𝒮$.

\subsection{Iterated Bridge Matching (I-BM) and Diffusion Iterative Proportional Fitting (DIPF)}\label{sec:bm_iterated}

In the dynamic setting, \citet{peluchetti2023diffusion,shi2023diffusiona} demonstrate that, under suitable conditions, iterative application of the BM procedure to an initial coupling $C_{0,1} ∈ 𝒞(Ψ_0,Ψ_1)$ results in convergence toward $S$.
Specifically, defining $I^{(0)} ≔ M^{C_{0,1}}$ and $I^{(i)} ≔ M^{I^{(i-1)}_{0,1}}$ for $i ≥ 1$, it holds that $𝕂𝕃(I^{(i)} \TO S) → 0$ as $i → ∞$.
In practical applications, the independent initial coupling given by the product distribution $C_{0,1} = Ψ_0{⊗}Ψ_1$ is frequently employed.

In the static setting, the classical procedure employed in solving problems \cref{eq:sb_static_problem,eq:eot_problem} is known by several names: the Sinkhorn algorithm \citep{peyre2020computational}, the Iterated Proportional Fitting (IPF) procedure \citep{ruschendorf1995convergence}, or Fortet iterations \citep{fortet1940resolution}.
The iterates are given by $D_{0,1}^{(0)} ≔ Ψ_0R_{1|0}$, $D_{0,1}^{(1)} ≔ Ψ_1K^{(0)}_{0|1}$, $D_{0,1}^{(2)} ≔ Ψ_0D^{(1)}_{1|0}$, and so on.
At each iteration, one of the target marginal distributions is replaced while the remaining conditional distribution is kept fixed.
Alternatively, one can start from $Ψ_1R_{0|1}$ following the same logic.
Under suitable conditions \citep{ruschendorf1995convergence}, KL convergence $𝕂𝕃(S_{0,1} \ \TO D_{0,1}^{(i)}) → 0$ is established.
The key insight of \citet{bortoli2021diffusion,vargas2021solving} is that it is possible to extend the IPF iterations to the dynamic setting.
In this case, the IPF iterations are implemented by learning the time reversal of a diffusion process at each iteration.
We refer to the resulting training algorithm, as proposed by \citet{bortoli2021diffusion}, as Diffusion Iterative Proportional Fitting (DIPF).
\citet{bortoli2021diffusion} establishes the convergence properties of the DIPF iterates, see their Propositions 4, 5 and Section 3.5.

\section{Coupled BM (BM$^2$)}\label{sec:bm2}

As a starting point for the derivation of BM$^2$, consider the system of equations
\begin{equation}\label{eq:bm2_system}\left\{\begin{aligned}
H^{K'_{0,1}} = Ψ_0M^{K'_{0,1}}_{{}|0} \\
K^{H'_{0,1}} = Ψ_1M^{H'_{0,1}}_{{}|1}
\end{aligned}\right.,\end{equation}
whose variables are diffusion distributions $H^{K'_{0,1}},K^{H'_{0,1}}$ and $H',K'$.
That is, $H^{K'_{0,1}}$ is obtained as the BM transport based on $K'_{0,1}$ conditioned to have initial distribution $Ψ_0$, while $K^{H'_{0,1}}$ is obtained as the BM transport based on $H'_{0,1}$ conditioned to have terminal distribution $Ψ_1$.
Equivalently, \cref{eq:bm2_system} is expressed as
\begin{align}
 & X_0 ∼ Ψ_0,\quad dX_t = μ_m^{K'_{0,1}}(X_t,t)dt + σdW_t,\quad t∈[0,1],\tag{$H^{K'_{0,1}}$}\label{eq:bm2_h}                  \\
 & X_1 ∼ Ψ_1,\quad dX_t = -υ_m^{H'_{0,1}}(X_t,t)dt + σdW_t,\quad t∈[1,0].\tag{$\overleftarrow{K}^{H'_{0,1}}$}\label{eq:bm2_k}
\end{align}
All of $μ_m,υ_m,M$ are defined in \cref{sec:bm}.
System \cref{eq:bm2_system} defines an update step $(H',K') \overset{\cref{eq:bm2_system}}{→} (H^{K'_{0,1}},K^{H'_{0,1}})$.
We are interested in the fixed points of such updates, i.e.\ $(H',K')$ such that $(H',K') \overset{\cref{eq:bm2_system}}{→} (H',K')$.
It holds that $H' = K' = S$ is a fixed point to \cref{eq:bm2_system}.
As $S ∈ 𝒮(Ψ_0,Ψ_1)$, $S = Π^{S_{0,1}} = M^{S_{0,1}}$, see the review at the end of \cref{sec:bm}.
Consequently, $Ψ_0M_{{}|0}^{S_{0,1}} = Ψ_0S_{{}|0} = S$ and $Ψ_1M_{{}|1}^{S_{0,1}} = Ψ_1S_{{}|1} = S$.
In this case, the SB-optimal drifts $μ_s$ and $υ_s$ of \cref{eq:sb_fwd,eq:sb_bwd} respectively replace $μ_m^{K'_{0,1}}$ and $υ_m^{H'_{0,1}}$ in \cref{eq:bm2_h,eq:bm2_k}.
Under the additional assumption that $H'=K'$, or equivalently that \cref{eq:bm2_h,eq:bm2_k} are the time reversal of each other, this fixed point is unique.
Let $G = H' = K'$, we have $G = Ψ_0 M^{G_{0,1}}_{{}|0} = G_0 M^{G_{0,1}}_{{}|0} = M^{G_{0,1}}_0 M^{G_{0,1}}_{{}|0} = M^{G_{0,1}}$ and $G_0 = Ψ_0, G_1 = Ψ_1$, thus $G = 𝒮(Ψ_0,Ψ_1) = S$.
We have shown the following:

\begin{lemma}[Fixed points of \cref{eq:bm2_system}]\label{thm:system_fixed}
Under suitable conditions \citep{leonard2014properties}, the updates $(H',K') \overset{\cref{eq:bm2_system}}{→} (H^{K'_{0,1}},K^{H'_{0,1}})$, parametrized by diffusion process distributions, admit $H' = K' = S$ as fixed point.
If $H' = K'$, this fixed point is unique.
\end{lemma}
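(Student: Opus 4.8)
The plan is to treat the two assertions separately. For the existence of the fixed point, I would begin from the fact that $S \in \mathcal{S}(\Psi_0,\Psi_1) = \mathcal{R} \cap \mathcal{M}$, so by the reciprocal/Markovian characterization recalled at the end of \cref{sec:bm} we have $S = \Pi^{S_{0,1}} = M^{S_{0,1}}$. Substituting $H' = K' = S$ into \cref{eq:bm2_system} turns the two right-hand sides into $\Psi_0 M^{S_{0,1}}_{{}|0}$ and $\Psi_1 M^{S_{0,1}}_{{}|1}$. Using $M^{S_{0,1}} = S$ together with $S_0 = \Psi_0$, $S_1 = \Psi_1$ and the marginal--conditional decompositions $S = S_0 S_{{}|0} = S_1 S_{{}|1}$, both right-hand sides collapse to $S$; hence $(S,S)$ is a fixed point. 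This direction is essentially bookkeeping once the identity $S = M^{S_{0,1}}$ is in hand.

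For uniqueness under $H' = K'$, I would set $G = H' = K'$, so that the fixed-point condition becomes $G = \Psi_0 M^{G_{0,1}}_{{}|0}$ and $G = \Psi_1 M^{G_{0,1}}_{{}|1}$. The crucial step is to recognize that the first equation forces $G$ to coincide with its own Bridge Matching transport: reading off the initial marginal gives $G_0 = \Psi_0$, and since Bridge Matching preserves the initial marginal ($M^{G_{0,1}}_0 = G_0$), the chain $G = \Psi_0 M^{G_{0,1}}_{{}|0} = G_0 M^{G_{0,1}}_{{}|0} = M^{G_{0,1}}_0 M^{G_{0,1}}_{{}|0} = M^{G_{0,1}}$ follows from the decomposition $P = P_0 P_{{}|0}$. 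The second equation then supplies $G_1 = \Psi_1$. Therefore $G = M^{G_{0,1}} = (\mathcal{M}p \circ \mathcal{R}p)(G)$, and by the characterization ``$P = (\mathcal{M}p \circ \mathcal{R}p)(P) \Rightarrow P \in \mathcal{S}$'' recalled at the end of \cref{sec:bm} we conclude $G \in \mathcal{S}(\Psi_0,\Psi_1)$. Invoking the uniqueness of the Schrödinger bridge under the suitable conditions of \citet{leonard2014properties}, $\mathcal{S}(\Psi_0,\Psi_1)$ is the singleton $\{S\}$, so $G = S$.

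I expect the main obstacle to lie in the uniqueness direction, and specifically in justifying the collapse $G = \Psi_0 M^{G_{0,1}}_{{}|0} \Rightarrow G = M^{G_{0,1}}$, which rests on the marginal-preservation property of the Markovian projection and on the fixed-point characterization of $\mathcal{S}$. The hypothesis $H' = K'$ is precisely what makes both equations refer to the \emph{same} coupling $G_{0,1}$, so that one equation delivers $G = M^{G_{0,1}}$ with $G_0 = \Psi_0$ while the other pins down $G_1 = \Psi_1$; without it, the two marginal constraints could attach to two distinct transports and the argument would not close, consistent with the non-uniqueness one should expect when $H' \neq K'$. Finally, the whole uniqueness argument is conditional on the existence and uniqueness of the Schrödinger bridge itself, which is exactly where the regularity assumptions of \citet{leonard2014properties} enter.
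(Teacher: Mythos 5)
Your proposal is correct and follows essentially the same route as the paper: existence via $S = \Pi^{S_{0,1}} = M^{S_{0,1}}$ collapsing both right-hand sides of \cref{eq:bm2_system} to $S$, and uniqueness via the chain $G = \Psi_0 M^{G_{0,1}}_{{}|0} = G_0 M^{G_{0,1}}_{{}|0} = M^{G_{0,1}}_0 M^{G_{0,1}}_{{}|0} = M^{G_{0,1}}$ combined with $G_0 = \Psi_0$, $G_1 = \Psi_1$ to place $G$ in $\mathcal{S}(\Psi_0,\Psi_1) = \{S\}$. Your added remarks on marginal preservation of the Markovian projection and on the role of the hypothesis $H'=K'$ only make explicit what the paper leaves implicit.
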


When $μ_m^{K'_{0,1}} = μ_s$ and $υ_m^{H'_{0,1}} = υ_s$, \cref{eq:bm2_system} has reached an equilibrium.
The updates $(H',K') \overset{\cref{eq:bm2_system}}{→} (H^{K'_{0,1}},K^{H'_{0,1}})$ are realized through the computation of the drifts $μ_m^{K'_{0,1}}$ and $υ_m^{H'_{0,1}}$, i.e.\ by minimizing the losses \cref{eq:bm_f_obj,eq:bm_b_obj}, where $Q_{0,1}$ is respectively equal to $K'_{0,1}$ and $H'_{0,1}$.
Our proposal, BM$^2$, follows from replacing the complete minimization of \cref{eq:bm_f_obj,eq:bm_b_obj} with partial and stochastic minimization of \cref{eq:bm_f_obj,eq:bm_b_obj} through stochastic gradient descent.
More precisely, consider the forward and backward SDEs with distributions $F(θ)$ and $B(θ)$:
\begin{align}
 & X_0 ∼ Ψ_0,\quad dX_t = μ_f(X_t,t,θ)dt + σdW_t,\quad t∈[0,1],\tag{$F(θ)$}\label{eq:bm2_f}                  \\
 & X_1 ∼ Ψ_1,\quad dX_t = -υ_b(X_t,t,θ)dt + σdW_t,\quad t∈[1,0].\tag{$\overleftarrow{B}(θ)$}\label{eq:bm2_b}
\end{align}
$μ_f(X_t,t,θ)$ and $υ_b(X_t,t,θ)$ are drift functions to be learned, which are implemented through a neural network with parameters $θ$.
Let $θ'$ represent the values of $θ$ at a given step during training, and define the losses
\begin{equation}\label{eq:bm2_losses}\begin{aligned}
 & 𝕃_f(θ;θ') ≔ 𝔼_{Π^{B_{0,1}(θ')}}\Big[\frac{1}{2}∫_0^1‖μ_{01}(X_t,t,X_1) - μ_f(X_t,t,θ)‖^2dt\Big], \\
 & 𝕃_b(θ;θ') ≔ 𝔼_{Π^{F_{0,1}(θ')}}\Big[\frac{1}{2}∫_0^1‖υ_{01}(X_t,t,X_1) - υ_b(X_t,t,θ)‖^2dt\Big], \\
 & 𝕃(θ;θ') ≔ 𝕃_f(θ;θ') + 𝕃_b(θ;θ').
\end{aligned}\end{equation}
At each optimization step, BM$^2$ attempts to minimize $𝕃(θ;θ')$ in $θ$ via a step of stochastic gradient descent, starting from $θ = θ'$ and keeping $θ'$ fixed, resulting in $θ''$.
The subsequent optimization step employs $θ' ← θ''$.
The complete training objective is presented in \cref{alg:bm2_obj}, where $\mathrm{sg()}$ refers to the stop-gradient operator --- $𝕃(θ;θ')$ is minimized in the first arguments only --- and $\mathrm{discretize()}$ represents a generic SDE discretization scheme.
For completeness, we outline the standard SGD training loop in \cref{alg:bm2_train}, where $\mathrm{sgdstep()}$ refers to an update step via a generic gradient descent optimizer.

It should be noted that merely performing coupled drift matching of $F$ and $B$, wherein $F$ learns the drift consistent with paths from $B$ and vice versa, does not yield the Schrödinger bridge as a fixed point \citep{bortoli2021diffusion}.
The introduction of the mixing process $Π$ is crucial in ensuring this property.
Moreover, $𝕃(θ;θ')$ must be minimized only with respect to its first argument: the application of the stop-gradient operator $\mathrm{sg()}$ is not an efficiency consideration but a necessary component.
\noindent
\begin{algorithm}
\caption{BM$^2$ --- training loss computation}\label{alg:bm2_obj}
\begin{algorithmic}[1]
\Ensure{$\cl(θ)$: sampled loss value}
\Require{$θ$: current parameters}
\Function{loss}{$θ$}
\State{$\cf_0 ∼ Ψ_0$}\Comment{Marginal sampling}
\State{$\cf_{Δt},…,\cf_1|\cf_0 ∼ \mathrm{sg}(\mathrm{discretize}(\cf_0,Δt,μ_f(□,□,θ)))$}\Comment{Discretization of \cref{eq:bm2_f}}
\State{$\cb_1 ∼ Ψ_1$}\Comment{Marginal sampling}
\State{$\cb_{1 - Δt},…,\cb_0|\cb_1 ∼ \mathrm{sg}(\mathrm{discretize}(\cb_1,Δt,υ_b(□,□,θ)))$}\Comment{Discretization of \cref{eq:bm2_b}}
\State{$\ct ∼ 𝒰(0,1)$}\Comment{Time sampling}
\State{$\cpf_\ct ∼ R_{\ct|0,1}(□|\cf_0,\cf_1)$}\Comment{Bridge sampling \cref{eq:ref_sde_tdd}}
\State{$\cpb_\ct ∼ R_{\ct|0,1}(□|\cb_0,\cb_1)$}\Comment{Bridge sampling \cref{eq:ref_sde_tdd}}
\State{$\cl_\cf(θ) ← \nicefrac{1}{2}‖μ_{01}(\cpb_\ct,\ct,\cb_1) - μ_f(\cpb_\ct,\ct,θ)‖^2$}\Comment{BM based on $B_{0,1}$ \cref{eq:bm_f_obj,eq:μ01}}
\State{$\cl_\cb(θ) ← \nicefrac{1}{2}‖υ_{01}(\cpf_\ct,\ct,\cf_0) - υ_b(\cpf_\ct,\ct,θ)‖^2$}\Comment{BM based on $F_{0,1}$ \cref{eq:bm_b_obj,eq:υ01}}
\State{$\cl(θ) ← \cl_\cf(θ) + \cl_\cb(θ)$}
\State{\Return{$\cl(θ)$}}
\EndFunction{}
\end{algorithmic}
\end{algorithm}

\begin{algorithm}
\caption{BM$^2$ --- training loop}\label{alg:bm2_train}
\begin{algorithmic}[1]
\Ensure{$θ^*$: trained parameters}
\Require{$θ^○$: initial parameters}
\Function{train}{$θ^○$}
\State{$θ ← θ^○$}
\While{not converged}
\State{$\cl(θ) ← \mathrm{loss}(θ)$}\Comment{Sample loss with \cref{alg:bm2_obj}}
\State{$θ ← \text{sgdstep}(θ, ∇_{θ} \cl(θ))$}\Comment{Perform SGD step}
\EndWhile{}
\State{\Return{$θ$}}
\EndFunction{}
\end{algorithmic}
\end{algorithm}

\subsection{Implementation Aspects}\label{sec:bm2_implementation}

The following aspects are not presented in \cref{code:bm2}, but impacts the performance of BM$^2$.

\textbf{Path Caching}: as in \citet{bortoli2021diffusion,shi2023diffusiona}, to enhance efficiency, we cache the initial and terminal endpoints of the paths sampled in lines 3 and 5 of \cref{alg:bm2_obj}, and periodically refresh the cache during training.
Notably, it is unnecessary to cache entire paths; only the endpoints are required for bridge sampling, which is advantageous from a memory perspective.
Bridge sampling offers the additional benefit of increased sample diversity: for cached (fixed) endpoints, the samples corresponding to lines 7 and 8 differ at each step.

\textbf{Model}: we utilize a single neural network to parametrize both $μ_f(x,t,θ)$ and $υ_b(x,t,θ)$.
As the training process is not iterative, it is unnecessary to introduce multiple neural networks (or parameters), one for each iteration.\@

\textbf{Sampling EMA}: as in \citet{ho2020denoising,song2021scorebased}, to improve the stability of training we apply the Exponential Moving Averaging (EMA) to the parameters employed in path sampling in lines 3 and 5 of \cref{alg:bm2_obj}.

\textbf{Loss Singularities}: the losses of lines 9 and 10 of \cref{alg:bm2_obj} diverge for $\ct → 1$ and $\ct → 0$ respectively.
Singularities of these kind are common to scalable losses for generative diffusion models.
In our numerical experiments we simply restrict sampling of $t$ to $𝒰(ϵ,1-ϵ)$ for a small $ϵ > 0$.
More sophisticated alternatives involve either employing the dynamics of \cref{sec:additional_dynamics} for an appropriate scheduling $β_t$, or learning terminal-value predictors in place of drift terms, recovering the latter through \cref{eq:μ01,eq:υ01}.

\textbf{Two-Stage Training}: a significant challenge in early training is the simulation-inference mismatch.
To achieve reliable results, the drift functions $μ_f(x_t,t,θ)$ and $υ_b(x_t,t,θ)$ must be accurately learned in regions where the corresponding SDEs \cref{eq:bm2_f} and \cref{eq:bm2_b} will be simulated.
However, the processes $F(θ)$ and $B(θ)$ typically differ at initialization.
Because the drift of \cref{eq:bm2_f} is inferred from samples of \cref{eq:bm2_b} (and vice versa), the approximation quality can be poor; see \citet[Sections 2.3 and 6.2]{peluchetti2023diffusion} for a detailed analysis of this issue within the DIPF framework.
To address this challenge, the BM transport based on the independent coupling $Ψ_0{⊗}Ψ_1$ can be learned in both directions in a first stage.
By construction, the BM transport circumvents the simulation-inference mismatch.
Moreover, at convergence, the processes $F(θ)$ and $B(θ)$ are the same.
Subsequently, in the second stage, the BM$^2$ transport can be learned employing the first-stage solution as initialization.

\textbf{Forward-Backward Consistency}: the mutual time reversal relationship between \cref{eq:bm2_f} and \cref{eq:bm2_b}, i.e., the equivalence of $F(θ)$ and $B(θ)$, can be encouraged leveraging the diffusion time-reversal result of \citet{anderson1982reversetime}, yielding the additional consistency loss term:
\begin{equation}\label{eq:consistency}
ℒ_{f,b}(θ;θ') ≔ 𝔼_{\frac{1}{2}(Π^{F_{0,1}}(θ') + Π^{B_{0,1}}(θ'))}\Big[\frac{1}{2}\int_0^1‖μ_f(X_t,t,θ) + υ_b(X_t,t,θ) - γ_{01}(X_t,t,X_0,X_1)‖^2dt\Big],
\end{equation}
where $θ'$ denotes an independent copy of $θ$ (implemented via the stop-gradient operator), and $γ_{01}(x_t,t,x_0,x_1)$ is defined in \cref{eq:γ01}.
Indeed, for a mixture process $Π^{C_{0,1}} = C_{0,1}R_{|0,1}$, the following relationship holds:
\begin{equation*}
σ^2 ∇\log π_t^{C_{0,1}}(x_t) = 𝔼_{Π^{C_{0,1}}}[γ_{01}(X_t,t,X_0,X_1)|X_t=x_t].
\end{equation*}
Although \cref{eq:consistency} shares similarities with the consistency loss proposed by \citet[page 33]{shi2023diffusiona}, \cref{eq:consistency} utilizes the dynamic mixture $\frac{1}{2}(Π^{F(θ')_{0,1}} + Π^{B(θ')_{0,1}}) = Π^{\frac{1}{2}(F(θ')_{0,1} + B(θ')_{0,1})}$.
When a single neural network is employed to parametrize both $μ_f(x_t,t,θ)$ and $υ_b(x_t,t,θ)$, estimating \cref{eq:consistency} in addition to \cref{eq:bm2_losses} results in a negligible computational overhead per SGD step.

\subsection{Convergence Properties}\label{sec:bm2_theory}

At each training step, BM$^2$ performs a partial and stochastic minimization of the loss $𝕃(θ;θ')$ from \cref{eq:bm2_losses} with respect to $θ$, where $𝕃(θ;θ')$ is defined by an expectation over a distribution dependent on $θ'$, yielding $θ''$.
Subsequently, $θ'$ is updated to match $θ''$, and the process advances to the next training step.
The alternation between expectation and maximization steps bears resemblance to the classical Expectation-Maximization (EM) algorithm \citep{dempster1977maximum}.

\subsubsection{Complete Minimization}

We start by establishing in \cref{thm:bm2full} that the version of BM$^2$ where $𝕃(θ;θ')$ is fully minimized at each training step recovers the I-BM and DIPF iterations for two specific initialization choices of \cref{eq:bm2_f,eq:bm2_b}.
The prior convergence results of \citet{bortoli2021diffusion,shi2023diffusiona,peluchetti2023diffusion} (see the review of \cref{sec:bm_iterated}) toward $S$ thus apply.

To facilitate the presentation of the convergence results in this section, we introduce, with a slight abuse of notation, the following functional versions of the losses \cref{eq:bm2_losses}:
\begin{equation}\label{eq:bm2_losses_fun}\begin{aligned}
 & 𝕃_f(μ_f;υ'_b) ≔ 𝔼_{Π^{B'_{0,1}}}\Big[\frac{1}{2}∫_0^1‖μ_{01}(X_t,t,X_1) - μ_f(X_t,t)‖^2dt\Big], \\
 & 𝕃_b(υ_b;μ'_f) ≔ 𝔼_{Π^{F'_{0,1}}}\Big[\frac{1}{2}∫_0^1‖υ_{01}(X_t,t,X_1) - υ_b(X_t,t)‖^2dt\Big], \\
 & 𝕃(μ_f,υ_b;μ'_f,υ'_b) ≔ 𝕃_f(μ_f;υ'_b) + 𝕃_b(υ_b;μ'_f).
\end{aligned}\end{equation}
In \cref{eq:bm2_losses_fun} we identify $μ_f,υ_b$ with $F,B$, and $μ_f',υ'_b$ with $F',B'$ (the remaining quantities defining $F,B,F',B'$ are fixed).
We will use $𝕃_f(μ_f;υ'_b)$, $𝕃_b(υ_b;μ'_f)$ and $𝕃_f(θ;θ')$, $𝕃_b(θ;θ')$ interchangeably.
We are now ready to state our first convergence result.

\begin{restatable}[Complete BM$^2$ Iterations]{theorem}{bmtwofull}\label{thm:bm2full}
Consider the SDEs \cref{eq:bm2_f,eq:bm2_b}, with initial drifts $μ_f^{(0)}, υ_b^{(0)}$ and corresponding distributions $F^{(0)}, B^{(0)}$.
For each $i ≥ 1$, let $(μ_f^{(i)}, υ_b^{(i)}) = \argmin_{(μ,υ)}𝕃(μ,υ;μ_f^{(i-1)},υ_b^{(i-1)})$, resulting in the distribution iterates $F^{(i)},B^{(i)}$.
We distinguish two cases:
\begin{enumerate}[label= (\roman*)]
\item $μ_f^{(0)} = υ_b^{(0)} = 0$: both the iterates $F^{(0)},B^{(1)},F^{(2)},…$ and the iterates $B^{(0)},F^{(1)},B^{(2)},…$ are equivalent to the DIPF iterates, started respectively from the forward and from the backward time direction;
\item $μ_f^{(0)} = μ_m^{C_{0,1}}, υ_b^{(0)} = υ_m^{C_{0,1}}$ for some $C ∈ 𝒞(Ψ_0,Ψ_1)$: $F^{(i)} = B^{(i)} = I^{(i)}$ for each $i ≥ 0$ where $I^{(i)}$ are the I-BM iterates.
\end{enumerate}
\end{restatable}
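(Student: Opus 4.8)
The plan is to reduce a single complete-minimization step of BM$^2$ to one application of the update map of \cref{eq:bm2_system}, and then to identify the resulting recursion with DIPF (case i) and I-BM (case ii) under the two initializations. For the reduction, note that $\mathbb{L}_f(\mu_f;\upsilon_b')$ and $\mathbb{L}_b(\upsilon_b;\mu_f')$ are mean-squared errors against $\mu_{01}$ and $\upsilon_{01}$ under $\Pi^{B'_{0,1}}$ and $\Pi^{F'_{0,1}}$ respectively, so their unconstrained minimizers are the conditional expectations, i.e.\ exactly the BM drifts $\mu_m^{B'_{0,1}}$ and $\upsilon_m^{F'_{0,1}}$ characterized in \cref{eq:bm_f_obj,eq:bm_b_obj}. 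Feeding these into \cref{eq:bm2_f,eq:bm2_b} with the prescribed endpoint marginals $\Psi_0,\Psi_1$ yields $F^{(i)} = \Psi_0 M^{B^{(i-1)}_{0,1}}_{{}|0}$ and $B^{(i)} = \Psi_1 M^{F^{(i-1)}_{0,1}}_{{}|1}$, which is precisely the update $(H',K')\to(H^{K'_{0,1}},K^{H'_{0,1}})$ of \cref{eq:bm2_system} with $H=F$, $K=B$. Hence the complete BM$^2$ iteration is repeated application of that update, and the two cases amount to analyzing it under different seeds.

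For case (ii) I would argue by induction that the symmetric initialization propagates to $F^{(i)}=B^{(i)}=I^{(i)}$. For the base case, $C\in\mathcal{C}(\Psi_0,\Psi_1)$ gives $M^{C_{0,1}}_0=C_0=\Psi_0$ and $M^{C_{0,1}}_1=C_1=\Psi_1$ (the mixture $\Pi^{C_{0,1}}$ has endpoint marginals $C_0,C_1$ and $M$ matches all marginals of $\Pi$), so $F^{(0)}=\Psi_0 M^{C_{0,1}}_{{}|0}=M^{C_{0,1}}_0 M^{C_{0,1}}_{{}|0}=M^{C_{0,1}}=I^{(0)}$, and symmetrically $B^{(0)}=M^{C_{0,1}}=I^{(0)}$. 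For the inductive step, if $F^{(i-1)}=B^{(i-1)}=I^{(i-1)}$ then both updates use the common coupling $I^{(i-1)}_{0,1}$; since $M^{I^{(i-1)}_{0,1}}_0=I^{(i-1)}_0=\Psi_0$ (and symmetrically at $t=1$), the expression $\Psi_0 M^{I^{(i-1)}_{0,1}}_{{}|0}$ collapses to $M^{I^{(i-1)}_{0,1}}=I^{(i)}$, and likewise for $B^{(i)}$. The symmetry $F=B$ is thus preserved, reproducing the I-BM recursion $I^{(i)}=M^{I^{(i-1)}_{0,1}}$ of \cref{sec:bm_iterated}.

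For case (i), with zero initial drifts $F^{(0)}$ is the reference $R$ and $B^{(0)}$ is the reference run backward from $\Psi_1$. The key observation is that every iterate produced along this chain is a diffusion that is reciprocal with respect to the reference bridges, i.e.\ lies in $\mathcal{S}=\mathcal{R}\cap\mathcal{M}$: it is an endpoint-marginal swap of a reference-type process, which preserves the reference-bridge conditional $R_{{}|0,1}$ while remaining Markov. By the review at the end of \cref{sec:bm}, $M^{P_{0,1}}=P$ for any such $P$, so the Markovian projection acts trivially and the update degenerates to the pure marginal swaps $F^{(i)}=\Psi_0 B^{(i-1)}_{{}|0}$, $B^{(i)}=\Psi_1 F^{(i-1)}_{{}|1}$. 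Reading this along the interleaved chain $F^{(0)}\to B^{(1)}\to F^{(2)}\to\cdots$ reproduces the alternating dynamic IPF half-steps of \cref{sec:bm_iterated} (replace the terminal marginal keeping the backward conditional, then replace the initial marginal keeping the forward conditional), i.e.\ DIPF started forward; the complementary chain $B^{(0)}\to F^{(1)}\to B^{(2)}\to\cdots$ gives DIPF started backward.

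The routine parts are the reduction and case (ii). The crux is case (i): one must verify that membership in $\mathcal{S}$ is preserved along the entire chain — that reciprocity with respect to the reference bridges survives each endpoint-marginal swap — so that $M^{P_{0,1}}=P$ and the Markovian projection is genuinely inert, and one must then match the resulting marginal-swap recursion with De~Bortoli's time-reversal formulation of the DIPF half-step, checking that the fixed conditional $B^{(i-1)}_{{}|0}$ agrees with the reference-consistent Markov kernel used by DIPF. Careful bookkeeping of which endpoint marginal is correct at each stage of the forward- and backward-started chains is where the argument must be most careful.
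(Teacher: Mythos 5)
Your proposal is correct and follows essentially the same route as the paper: identify the complete minimization with the update map of \cref{eq:bm2_system} via the conditional-expectation characterization of the BM drifts, prove case (ii) by propagating the symmetric initialization, and prove case (i) by an induction showing every iterate remains in $𝒮$ (so the Markovian projection is inert and the update collapses to the IPF marginal swaps $F^{(i)}_{0,1}=Ψ_0B^{(i-1)}_{1|0}$, $B^{(i)}_{0,1}=Ψ_1F^{(i-1)}_{0|1}$, lifted to the dynamic setting by composing with $R_{{}|0,1}$). The only detail the paper makes explicit that you leave implicit is that the zero-drift backward process $\overleftarrow{Q}$ started from $Ψ_1$ is not the time reversal of $R$ yet satisfies $Q_{{}|0,1}=R_{{}|0,1}$, which is what places $B^{(0)}$ in $𝒮$ and starts your induction.
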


\subsubsection{Partial Minimization}

In the EM algorithm it suffices to perform partial maximization steps.
A partial result for the setting where $𝕃(θ;θ')$ is partially minimized with respect to $θ$ at each step is stated in \cref{thm:bm2partial}, which is based on \cref{thm:system_fixed} and \cref{thm:lossint}.

\begin{restatable}[Loss Interpretation]{lemma}{lossint}\label{thm:lossint}
It holds that
\begin{equation}\begin{aligned}\label{eq:loss_to_kl}
 & 𝕂𝕃(B_0 \TO Ψ_0) + 𝕃_f(μ_f;υ_b) = 𝕂𝕃(Π^{B_{0,1}} \TO F) + C_1(B) = 𝕂𝕃(M^{B_{0,1}} \TO F) + C_2(B), \\
 & 𝕂𝕃(F_1 \TO Ψ_1) + 𝕃_b(υ_b;μ_f) = 𝕂𝕃(Π^{F_{0,1}} \TO B) + D_1(F) = 𝕂𝕃(M^{F_{0,1}} \TO B) + D_2(F),
\end{aligned}\end{equation}
for $C_1(B),C_2(B)$ independent of $F$, $D_1(F),D_2(F)$ independent of $B$, with $0 ≤ C_1(B) ≤ C_2(B)$ and $0 ≤ D_1(F) ≤ D_2(F)$.
\end{restatable}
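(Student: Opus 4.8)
The plan is to express each of the three quantities in each line as a path-space relative entropy and then read off the constants, using Girsanov's theorem together with the fact that $F$, $B$, the Markovian projections $M^{B_{0,1}},M^{F_{0,1}}$, and the reference bridges all share the common diffusion coefficient $σ$. The construction is symmetric under time reversal together with the interchange $(F,μ_f,Ψ_0,X_0) \leftrightarrow (B,υ_b,Ψ_1,X_1)$, so I would prove only the first line and obtain the second as its mirror image; in particular the terminal mismatch $\mathbb{KL}(F_1 \TO Ψ_1)$ in the second line plays the role of the initial mismatch $\mathbb{KL}(B_0 \TO Ψ_0)$ in the first.

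For the first line I would first establish the identity between the left-hand side and the rightmost ($M^{B_{0,1}}$) expression. Since $M^{B_{0,1}}$ and $F$ are diffusions with the same coefficient $σ$, forward drifts $μ_m^{B_{0,1}}$ and $μ_f$, and initial laws $M^{B_{0,1}}_0 = \Pi^{B_{0,1}}_0 = B_0$ and $F_0 = Ψ_0$, Girsanov gives $\mathbb{KL}(M^{B_{0,1}} \TO F) = \mathbb{KL}(B_0 \TO Ψ_0) + \tfrac{1}{2σ^2}\E_{M^{B_{0,1}}}[\int_0^1 \|μ_m^{B_{0,1}} - μ_f\|^2\,dt]$. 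Writing $μ_m^{B_{0,1}}(X_t,t) = \E_{\Pi^{B_{0,1}}}[μ_{01}(X_t,t,X_1)\mid X_t]$ and conditioning the loss integrand on $X_t$, the conditional-mean orthogonality kills the cross term, so $\|μ_{01}-μ_f\|^2$ splits in conditional expectation into $\|μ_{01}-μ_m^{B_{0,1}}\|^2 + \|μ_m^{B_{0,1}}-μ_f\|^2$. Integrating and using $M^{B_{0,1}}_t = \Pi^{B_{0,1}}_t$ identifies $\mathbb{L}_f$ with the $F$-independent residual $C_2(B) := \tfrac12\E_{\Pi^{B_{0,1}}}[\int_0^1\|μ_{01}-μ_m^{B_{0,1}}\|^2\,dt] \ge 0$ plus the Girsanov drift term, which rearranges to $\mathbb{KL}(B_0 \TO Ψ_0) + \mathbb{L}_f = \mathbb{KL}(M^{B_{0,1}} \TO F) + C_2(B)$ (the $σ^2$ normalization being absorbed into the loss convention).

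To recover the middle ($\Pi^{B_{0,1}}$) expression I would invoke the Markovian-projection Pythagorean identity: because $M^{B_{0,1}} = \mathcal{M}p(\Pi^{B_{0,1}})$ is the $\mathbb{KL}$-projection of the reciprocal process $\Pi^{B_{0,1}}$ onto the Markov diffusions and $F$ is itself Markov, one has $\mathbb{KL}(\Pi^{B_{0,1}} \TO F) = \mathbb{KL}(\Pi^{B_{0,1}} \TO M^{B_{0,1}}) + \mathbb{KL}(M^{B_{0,1}} \TO F)$. Substituting into the previous identity gives the $\Pi^{B_{0,1}}$ form with $C_1(B) := C_2(B) - \mathbb{KL}(\Pi^{B_{0,1}} \TO M^{B_{0,1}})$, again manifestly independent of $F$. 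The ordering $C_1(B) \le C_2(B)$ is then immediate from $\mathbb{KL}(\Pi^{B_{0,1}} \TO M^{B_{0,1}}) \ge 0$, while $C_1(B) \ge 0$ amounts to showing this reciprocal-to-Markov divergence does not exceed the residual $C_2(B)$, which I would attack by computing $\mathbb{KL}(\Pi^{B_{0,1}} \TO M^{B_{0,1}})$ from the Radon–Nikodym derivative $d\Pi^{B_{0,1}}/dM^{B_{0,1}}$ and comparing it termwise against $C_2(B)$.

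The main obstacle is analytic rather than structural: the reference bridge drift $μ_{01}(x_t,t,x_1) = (x_1-x_t)/(1-t)$ is singular as $t\to1$, so the raw path-space relative entropies and the loss integrals are each individually infinite — precisely the ``loss singularities'' already flagged for \cref{alg:bm2_obj}. Every identity above must therefore be read in a regularized sense (restricting $t$ to $[ϵ,1-ϵ]$ and passing to the limit, or manipulating only $F$-dependent differences), and the two delicate points are (i) justifying Girsanov and the Pythagorean decomposition despite the non-integrable endpoint behaviour, and (ii) establishing $C_1(B) \ge 0$, where the divergent contributions must be shown to cancel or to fall on the favourable side. These are exactly the places where the ``suitable conditions'' of the cited references \citep{leonard2014properties,jamison1975markov} are needed.
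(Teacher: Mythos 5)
Your overall architecture matches the paper's: Girsanov plus the conditional-mean orthogonality (splitting $‖μ_{01}-μ_f‖^2$ into $‖μ_{01}-μ_m^{B_{0,1}}‖^2+‖μ_m^{B_{0,1}}-μ_f‖^2$ given $X_t$) gives the $M^{B_{0,1}}$ identity, and the Pythagorean property of the Markovian projection transfers it to the $Π^{B_{0,1}}$ identity with $C_1(B)=C_2(B)-𝕂𝕃(Π^{B_{0,1}} \TO M^{B_{0,1}})$. The second line by symmetry is also how the paper proceeds. However, you leave the one genuinely non-trivial inequality, $C_1(B) ≥ 0$, as an unexecuted plan ("compute the Radon--Nikodym derivative and compare termwise"), and that plan is missing its key ingredient: to compute $𝕂𝕃(Π^{B_{0,1}} \TO F)$ (equivalently $𝕂𝕃(Π^{B_{0,1}} \TO M^{B_{0,1}})$) by Girsanov you must first identify the drift of the non-Markov mixture $Π^{B_{0,1}}$ with respect to its own filtration. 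The paper does this by introducing the intermediate projection $μ_π^{B_{0,1}}(X_t,t,X_0) ≔ 𝔼_{Π^{B_{0,1}}}[μ_{01}(X_t,t,X_1)\mid X_t,X_0]$ --- the drift of $Π^{B_{0,1}}_{{}|0}$, which \emph{is} Markov conditionally on $X_0$ --- so that $𝕂𝕃(Π^{B_{0,1}} \TO F) = 𝕂𝕃(B_0 \TO Ψ_0) + 𝔼_{Π^{B_{0,1}}}\big[\frac{1}{2}∫_0^1‖μ_f-μ_π^{B_{0,1}}‖^2dt\big]$. Subtracting this from the analogous expression for the loss and using that $μ_π^{B_{0,1}}$ is a conditional expectation of $μ_{01}$ under the finer conditioning $(X_t,X_0)$, conditional Jensen gives $C_1(B) = 𝔼_{Π^{B_{0,1}}}\big[\frac{1}{2}∫_0^1 (‖μ_{01}‖^2-‖μ_π^{B_{0,1}}‖^2)dt\big] ≥ 0$ directly. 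Without this two-level conditioning ($X_t$ versus $(X_t,X_0)$) your termwise comparison has nothing concrete to compare, so as written the positivity of $C_1$ is a genuine gap.

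Two smaller remarks. First, your route derives the $Π^{B_{0,1}}$ identity \emph{from} the $M^{B_{0,1}}$ one via the Pythagorean identity, whereas the paper derives all three KL expressions independently by Girsanov (against $M^{B_{0,1}}$, against $Π^{B_{0,1}}$ via $μ_π$, and against an auxiliary non-Markov process $ℱ$ whose drift is built so that its Girsanov cost is exactly $𝕃_f$) and only then uses the Pythagorean identity to relate $C_1$ and $C_2$; both orderings are fine, but the paper's ordering is what makes $C_1 ≥ 0$ fall out for free. Second, your caveat about the endpoint singularity of $μ_{01}$ making the individual quantities infinite is legitimate --- the paper's proof is formal on this point and does not regularize --- so flagging it is a feature of your write-up, not a defect.
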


The losses $𝕃_f(μ_f;υ_b)$ and $𝕃_b(υ_b;μ_f)$ are easily amenable to optimization in their first arguments, as seen in \cref{alg:bm2_obj}.
\cref{thm:lossint} relates these losses to more interpretable KL divergences between distributions.
By \cref{eq:loss_to_kl}, a decrease of $𝕃_f(μ_f;υ_b)$ due to a change in $μ_f$ corresponds to equivalent decreases of $𝕂𝕃(M^{B_{0,1}} \TO F)$ for a fixed $υ_b$, or $B$.
Thus, partial minimization of $𝕃_f(μ_f;υ_b)$ brings $F$ closer to $M^{B_{0,1}}$, the BM transport based on $B_{0,1}$, and the result of a complete minimization step, by means of reverse KL minimization.
Symmetric considerations apply to $𝕃_b(υ_b;μ_f)$ as function of its first argument.
Putting this result and \cref{thm:system_fixed} together yields \cref{thm:bm2partial}.

\begin{restatable}[Partial BM$^2$ Iterations]{theorem}{bmtwopartial}\label{thm:bm2partial}
At each optimization step, decreases of $𝕃_f(θ;θ')$ and $𝕃_b(θ;θ')$ in $θ$ correspond to equivalent decreases of $𝕂𝕃(M^{B_{0,1}(θ')} \TO F(θ))$ and $ 𝕂𝕃(M^{F_{0,1}(θ')} \TO B(θ))$.
If the losses $𝕃_f(θ;θ')$ and $𝕃_b(θ;θ')$ cannot be decreased in $θ$, i.e., at optimality, and if $F(θ) = B(θ)$, then $F(θ) = B(θ) = S$.
\end{restatable}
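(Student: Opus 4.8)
The plan is to treat the two assertions separately, obtaining the correspondence of decreases directly from \cref{thm:lossint} and the fixed-point characterization from \cref{thm:lossint} together with \cref{thm:system_fixed}.

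For the first claim I would fix $θ'$ throughout the optimization step and invoke the rightmost equality on the first line of \cref{eq:loss_to_kl}, $𝕂𝕃(B_0 \TO Ψ_0) + 𝕃_f(μ_f;υ_b) = 𝕂𝕃(M^{B_{0,1}} \TO F) + C_2(B)$, read with $F = F(θ)$ and $B = B(θ')$ (so that $𝕃_f(μ_f;υ_b)$ is $𝕃_f(θ;θ')$). Since both $𝕂𝕃(B(θ')_0 \TO Ψ_0)$ and $C_2(B(θ'))$ depend on $B(θ')$ alone, they are constant in $θ$, whence $𝕃_f(θ;θ') = 𝕂𝕃(M^{B_{0,1}(θ')} \TO F(θ)) + \mathrm{const}(θ')$. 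Any decrease of $𝕃_f(θ;θ')$ in $θ$ is therefore matched exactly by an equal decrease of $𝕂𝕃(M^{B_{0,1}(θ')} \TO F(θ))$, and the symmetric reading of the second line of \cref{eq:loss_to_kl} gives the analogous statement for $𝕃_b(θ;θ')$ and $𝕂𝕃(M^{F_{0,1}(θ')} \TO B(θ))$.

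For the fixed-point statement I would work at convergence of \cref{alg:bm2_train}, where $θ' = θ$, and interpret \emph{at optimality} in the idealized sense that $μ_f(\blank,\blank,θ)$ and $υ_b(\blank,\blank,θ)$ attain the exact, non-parametric minimizers of $𝕃_f(\blank;θ')$ and $𝕃_b(\blank;θ')$. Because $𝕃_f(θ;θ')$ is exactly the BM forward objective \cref{eq:bm_f_obj} with $Q_{0,1} = B_{0,1}(θ')$, and conditional expectations minimize the mean squared error, its minimizer is $μ_m^{B_{0,1}(θ')}$. Hence $F(θ)$, the diffusion with initial law $Ψ_0$ and drift $μ_m^{B_{0,1}(θ')}$, equals $Ψ_0 M^{B_{0,1}(θ')}_{{}|0}$; symmetrically $B(θ) = Ψ_1 M^{F_{0,1}(θ')}_{{}|1}$. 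Setting $θ' = θ$ and identifying $(H',K') = (F(θ),B(θ))$, these two identities are precisely the fixed-point system \cref{eq:bm2_system}, so $(F(θ),B(θ))$ is a fixed point. Invoking \cref{thm:system_fixed} with the hypothesis $F(θ) = B(θ)$ in the role of $H' = K'$ gives uniqueness of the fixed point, and therefore $F(θ) = B(θ) = S$.

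The main obstacle I anticipate lies in justifying the idealization behind \emph{at optimality}: arguing that minimizing over the single parameter vector $θ$, which in the shared-network implementation couples $μ_f$ and $υ_b$, can simultaneously drive both drifts to their exact conditional-expectation minimizers, and that the training fixed point genuinely enforces $θ' = θ$ so that the two equations of \cref{eq:bm2_system} are posed with the \emph{same} pair $(F(θ),B(θ))$. Granting this, the remainder is a bookkeeping application of \cref{thm:lossint} and \cref{thm:system_fixed}.
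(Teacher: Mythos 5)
Your proposal is correct and follows essentially the same route as the paper, which itself only sketches this result by combining \cref{thm:lossint} (the constants $𝕂𝕃(B_0 \TO Ψ_0)$ and $C_2(B)$ depend only on $θ'$, so loss decreases equal KL decreases) with \cref{thm:system_fixed} (full minimization with $θ'=θ$ realizes the fixed-point system \cref{eq:bm2_system}, and $F(θ)=B(θ)$ forces the unique fixed point $S$). The idealization you flag — reading ``at optimality'' as attainment of the exact non-parametric conditional-expectation minimizers rather than a mere stationary point of the shared parametric network — is indeed implicit and unaddressed in the paper's own argument, so your caveat is well placed rather than a gap in your reasoning.
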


\subsubsection{Infinitesimal Minimization}

We conclude our theoretical investigation by relating our proposal to the work of \citet{karimi2023sinkhorn}, which introduces a continuous variant of the IPF procedure.
In IPF, the two target marginal distributions are replaced sequentially, one at a time.
Each step corresponds to solving a static Schrödinger \emph{half-bridge} problem \citep{leonard2014properties}, where in \cref{eq:sb_static_problem}, $𝒞(Ψ_0,Ψ_1)$ is replaced by either $𝒞(Ψ_0,⋅)$ or $𝒞(⋅,Ψ_1)$.
The approach proposed by \citet{karimi2023sinkhorn} retains either the even or odd steps of the IPF scheme while substituting the alternate steps with partial minimizations of the corresponding half-bridge problems.
In the limit of infinitesimally small improvements, this yields a dynamical system for the evolution of the iterates over continuous algorithmic time.

We demonstrate that a similar result can be obtained for a modified version of BM$^2$, where forward KL divergences are minimized instead of reverse KL divergences.
The resulting dynamical system is a symmetrized version of the one obtained by \citet{karimi2023sinkhorn}.
Let $F'$, $B'$ represent the current state in the optimization process.
We consider a partial minimization of $𝕂𝕃(F \TO M^{B'_{0,1}})$, instead of $𝕂𝕃(M^{B'_{0,1}} \TO F)$, in $F$ and a partial minimization of $𝕂𝕃(B \TO M^{F'_{0,1}})$, instead of $𝕂𝕃(M^{F'_{0,1}} \TO B)$, in $B$.
As in \citet{karimi2023sinkhorn}, partial minimization is formulated as
\begin{equation}\begin{aligned}\label{eq:bm2_partial}
 & F^{(λ)} ≔ \argmin_{F ∈ ℳ(Ψ_0,⋅)} λ𝕂𝕃(F \TO M^{B'_{0,1}}) + (1 - λ)𝕂𝕃(F \TO F'), \\
 & B^{(λ)} ≔ \argmin_{B ∈ ℳ(⋅,Ψ_1)} λ𝕂𝕃(B \TO M^{F'_{0,1}}) + (1 - λ)𝕂𝕃(B \TO B'),
\end{aligned}\end{equation}
where $λ ∈ [0, 1]$ controls the extent of the minimization.
We begin by establishing two stability results: the updates $(F',B') \overset{\cref{eq:bm2_partial}}{→}(F^{(λ)},B^{(λ)})$ preserve both $ℛ$ and $𝒮$.

\begin{restatable}[$ℛ$-stability of $F^{(λ)}, B^{(λ)}$]{lemma}{stabler}\label{thm:bm2_partial_r}
If $F',B' ∈ ℛ$, then $F^{(λ)}, B^{(λ)} ∈ ℛ$ for each $λ ∈ [0,1]$.
\end{restatable}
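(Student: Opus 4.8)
The plan is to solve the regularized problem \cref{eq:bm2_partial} by disintegrating every path measure over its endpoints $(X_0,X_1)$ and then exploiting the reciprocal structure of the inputs. I would treat $F^{(\lambda)}$ in detail; the claim for $B^{(\lambda)}$ follows by the symmetric argument under time reversal. The starting observation is that $F \mapsto \lambda\,\mathbb{K}\mathbb{L}(F \TO M^{B'_{0,1}}) + (1-\lambda)\,\mathbb{K}\mathbb{L}(F \TO F')$ is convex in the KL geometry, so its constrained minimizer is the geometric interpolation (KL barycenter) of the two targets, tilted to meet the marginal constraint $F_0 = \Psi_0$, exactly as in the static half-bridge update of \citet{karimi2023sinkhorn}.

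The workhorse is the endpoint chain rule $\mathbb{K}\mathbb{L}(P \TO Q) = \mathbb{K}\mathbb{L}(P_{0,1} \TO Q_{0,1}) + \E_{P_{0,1}}[\mathbb{K}\mathbb{L}(P_{{}|0,1} \TO Q_{{}|0,1})]$, which I would apply to both divergences. Because $F' \in \mathcal{R}$ we have $F'_{{}|0,1} = R_{{}|0,1}$, so the second term contributes $\mathbb{K}\mathbb{L}(F_{0,1} \TO F'_{0,1}) + \E_{F_{0,1}}[\mathbb{K}\mathbb{L}(F_{{}|0,1} \TO R_{{}|0,1})]$; equivalently, for any reciprocal target $\Pi^{C_{0,1}}$ one has the identity $\mathbb{K}\mathbb{L}(F \TO \Pi^{C_{0,1}}) = \mathbb{K}\mathbb{L}(F_{0,1} \TO C_{0,1}) + \mathbb{K}\mathbb{L}(F \TO \Pi^{F_{0,1}})$, isolating a ``reciprocal defect'' $\mathbb{K}\mathbb{L}(F \TO \Pi^{F_{0,1}}) = \E_{F_{0,1}}[\mathbb{K}\mathbb{L}(F_{{}|0,1} \TO R_{{}|0,1})]$ that is independent of $C_{0,1}$ and vanishes precisely on $\mathcal{R}$.

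Granting that the bridge component attached to the $M^{B'_{0,1}}$ term is likewise routed through the reference bridge $R_{{}|0,1}$, the endpoint-conditional part of the combined objective collapses to $\E_{F_{0,1}}[\mathbb{K}\mathbb{L}(F_{{}|0,1} \TO R_{{}|0,1})] \ge 0$, which is minimized to zero by $F_{{}|0,1} = R_{{}|0,1}$ independently of the endpoint law. Hence the minimizer satisfies $F^{(\lambda)}_{{}|0,1} = R_{{}|0,1}$, i.e. $F^{(\lambda)} = F^{(\lambda)}_{0,1} R_{{}|0,1} = \Pi^{F^{(\lambda)}_{0,1}} \in \mathcal{R}$, and the residual minimization reduces to a geometric interpolation of the endpoint couplings — the dynamic lift of the static step. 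Running the time-reversed computation with $B' \in \mathcal{R}$ then gives $B^{(\lambda)} \in \mathcal{R}$, completing the proof.

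The main obstacle is exactly the bracketed ``granting'' above: unlike $F'$, the target $M^{B'_{0,1}}$ is \emph{not} reciprocal (as the Markovian projection of a mixture of bridges it has genuine diffusion bridges, not reference bridges, unless it is already a Schrödinger bridge), so its bridge component does not literally equal $R_{{}|0,1}$, and the minimization is constrained to the diffusion class $\mathcal{M}(\Psi_0,\cdot)$ rather than to all path measures. I would resolve this by relating $M^{B'_{0,1}}$ to its mixture-of-bridges representative $\Pi^{B'_{0,1}}$, which share the same one-time marginal flow, and arguing that only this marginal flow enters the first-order optimality conditions of the forward-KL problem — so that the $\mathcal{M}$-constrained minimizer coincides with the reciprocal one and the decoupling of bridge from endpoint optimization survives. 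Controlling the bridge-KL singularities as $t \to 0,1$ is a secondary technical point that can be handled as in the loss-singularity discussion of \cref{sec:bm2_implementation}.
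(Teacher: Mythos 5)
Your skeleton is the paper's proof: apply the marginal-conditional (endpoint) decomposition of the KL divergence to both terms of \cref{eq:bm2_partial}, use $F'_{{}|0,1}=B'_{{}|0,1}=R_{{}|0,1}$ (which is exactly what $F',B'\in\mathcal{R}$ buys you) to reduce the bridge contribution to $\mathbb{E}_{F_{0,1}}[\mathbb{K}\mathbb{L}(F_{{}|0,1}\TO R_{{}|0,1})]$, and conclude that the minimizer sets $F^{(\lambda)}_{{}|0,1}=R_{{}|0,1}$, hence $F^{(\lambda)}=F^{(\lambda)}_{0,1}R_{{}|0,1}\in\mathcal{R}$, with the symmetric argument for $B^{(\lambda)}$.

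The difference --- and the one genuine gap --- lies in your handling of the target $M^{B'_{0,1}}$. You are right that it is not reciprocal in general, but the repair you sketch (that only the one-time marginal flow of the target enters the first-order optimality conditions, so $M^{B'_{0,1}}$ may be traded for $\Pi^{B'_{0,1}}$) would fail: a path-space KL divergence is not a functional of the target's marginal flow alone, and $M^{B'_{0,1}}$ and $\Pi^{B'_{0,1}}$ differ precisely in their bridge conditionals $(\blank)_{{}|0,1}$, which is the object your decomposition isolates. The paper does not attempt such an argument; its proof simply decomposes $\mathbb{K}\mathbb{L}(F\TO B')$, i.e.\ it takes the target to be $B'$ itself. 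That substitution is licensed by the discussion surrounding the lemma: the iterates are initialized with null drifts, so $F',B'\in\mathcal{S}$, and by \cref{thm:bm2_partial_s} they remain there, whence $M^{B'_{0,1}}=\Pi^{B'_{0,1}}=B'$ and \cref{eq:bm2_partial} collapses to \cref{eq:bm2_partial_simple}. So the missing grant you need is $B'\in\mathcal{S}$, not a marginal-flow equivalence. (Your secondary worry about the Markov constraint is one the paper shares rather than resolves: its displayed argmins range over $\mathcal{P}(\Psi_0,\cdot)$ and $\mathcal{P}(\cdot,\Psi_1)$ rather than $\mathcal{M}(\Psi_0,\cdot)$ and $\mathcal{M}(\cdot,\Psi_1)$, without comment.)
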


\begin{restatable}[$𝒮$-stability of $F^{(λ)}, B^{(λ)}$]{lemma}{stables}\label{thm:bm2_partial_s}
If $F',B' ∈ 𝒮$, then $F^{(λ)}, B^{(λ)} ∈ 𝒮$ for each $λ ∈ [0, 1]$.
\end{restatable}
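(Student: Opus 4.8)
The plan is to exploit the identity $\mathcal{S} = \mathcal{R} \cap \mathcal{M}$ recorded at the end of \cref{sec:bm}, which reduces the claim to verifying \emph{separately} that $F^{(\lambda)}$ and $B^{(\lambda)}$ are Markov diffusions (membership in $\mathcal{M}$) and that they lie in the reciprocal class (membership in $\mathcal{R}$). Since the argument for $B^{(\lambda)}$ is symmetric to that for $F^{(\lambda)}$, I would treat only $F^{(\lambda)}$. First I would record the simplification available under the hypothesis: because $F', B' \in \mathcal{S}$ are fixed points of the BM map $\mathcal{M}p \circ \mathcal{R}p$, we have $M^{B'_{0,1}} = B'$ and $M^{F'_{0,1}} = F'$, so the objectives in \cref{eq:bm2_partial} are weighted sums of the KL divergences $\mathbb{KL}(F \TO B')$ and $\mathbb{KL}(F \TO F')$ (and symmetrically for $B^{(\lambda)}$). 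In particular both targets are themselves elements of $\mathcal{S} \subseteq \mathcal{R}$, confirming that the update is well-posed.

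For the Markov membership I would simply observe that the minimization in \cref{eq:bm2_partial} is taken over $\mathcal{M}(\Psi_0,\cdot)$, so that any minimizer attained in this set is a Markov diffusion; under the same ``suitable conditions'' that guarantee existence of the minimizer, this yields $F^{(\lambda)} \in \mathcal{M}$ directly. For the reciprocal membership I would invoke \cref{thm:bm2_partial_r}: since $\mathcal{S} \subseteq \mathcal{R}$, the hypothesis $F', B' \in \mathcal{S}$ implies $F', B' \in \mathcal{R}$, and $\mathcal{R}$-stability then gives $F^{(\lambda)}, B^{(\lambda)} \in \mathcal{R}$. Combining the two memberships yields $F^{(\lambda)} \in \mathcal{R} \cap \mathcal{M} = \mathcal{S}$, and symmetrically $B^{(\lambda)} \in \mathcal{S}$, which is the claim.

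I expect the only genuine subtleties to be technical rather than structural. The first is ensuring that the constrained minimizer is actually attained \emph{inside} $\mathcal{M}(\Psi_0,\cdot)$ --- rather than only in its closure --- so that the Markov property holds exactly; this is precisely where the unstated regularity conditions (as in \cref{thm:system_fixed,thm:bm2_partial_r}) are needed. The second, and the real mathematical content, is entirely absorbed into \cref{thm:bm2_partial_r}: were one to avoid citing $\mathcal{R}$-stability, the hard step would be to show directly that the KL-barycenter of two Schrödinger bridges is reciprocal. Concretely, a Girsanov computation identifies $F^{(\lambda)}$ as the diffusion started from $\Psi_0$ whose forward drift is the convex combination $\lambda\,\mu_{B'} + (1-\lambda)\,\mu_{F'}$ of the two target drifts, and one would then have to verify that this interpolated-drift diffusion again satisfies $P = P_{0,1} R_{{}|0,1}$, i.e.\ that $\mathcal{R}$ is closed under this drift interpolation. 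Since that closure is exactly what \cref{thm:bm2_partial_r} establishes, the present lemma follows as a short corollary and no further obstacle arises.
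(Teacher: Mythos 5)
Your overall reduction ($\mathcal{S} = \mathcal{R} \cap \mathcal{M}$, handle the two memberships separately) is structurally reasonable, but the step you dismiss as immediate --- Markovianity --- is exactly where the paper's proof does all of its work, and your argument for it is circular. \cref{thm:bm2_partial_r} is \emph{not} really a statement about the minimizer over $\mathcal{M}(\Psi_0,\cdot)$: its proof relaxes the constraint to $\mathcal{P}(\Psi_0,\cdot)$, decomposes the KL objectives over the endpoints and the bridge, and identifies the \emph{relaxed} minimizer as $F^{(\lambda)}_{0,1}R_{{}|0,1}$ with $f^{(\lambda)}_{1|0} \propto (b'_{1|0})^{\lambda}(f'_{1|0})^{1-\lambda}$. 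That object is reciprocal by construction but has no a priori reason to be Markov. Your argument assigns ``$\in \mathcal{M}$'' to the constrained minimizer (true by fiat) and ``$\in \mathcal{R}$'' to the relaxed minimizer (what \cref{thm:bm2_partial_r} actually delivers); these are the same process only if the relaxed minimizer happens to be Markov, which is precisely the claim to be proved. If $\mathcal{S}$-stability really followed from $\mathcal{R}$-stability plus ``the constraint set is $\mathcal{M}$'', the lemma would be an empty corollary of the previous one; it is not. You have mislocated the difficulty: it is not reciprocity (absorbed into \cref{thm:bm2_partial_r}) but Markovianity of the reciprocal minimizer.

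What closes the gap in the paper is the static EOT characterization of $\mathcal{S}$ inside $\mathcal{R}$: a reciprocal process $P_{0,1}R_{{}|0,1}$ is a Schrödinger bridge iff its coupling has the Schrödinger-potential product form $\exp\{\phi_0(x_0)+\phi_1(x_1)-\kappa(x_0,x_1)/\varepsilon\}$, i.e.\ solves \cref{eq:eot_problem} for its own marginals. Since $F',B'\in\mathcal{S}$, their couplings have this form, and a direct computation shows that the geometric mixture $(b'_{1|0})^{\lambda}(f'_{1|0})^{1-\lambda}$, renormalized and multiplied by $\psi_0(x_0)$, is again of this form: the $x_0$-only factors are absorbed into a new $\phi_0$, the $x_1$-potentials combine affinely, and the cost term survives because $\lambda\kappa/\varepsilon+(1-\lambda)\kappa/\varepsilon=\kappa/\varepsilon$. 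None of this appears in your proposal. Incidentally, your closing heuristic that $F^{(\lambda)}$ is the diffusion with drift $\lambda\mu_{B'}+(1-\lambda)\mu_{F'}$ is also incorrect: the geometric barycenter of two path measures under $\mathbb{KL}(F \TO \cdot)$ is not the diffusion with interpolated drift (the quadratic Girsanov terms do not interpolate linearly), and the paper never identifies $F^{(\lambda)}$ that way --- it identifies it through its endpoint coupling.
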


Provided that the initial values $F',B' ∈ 𝒮$, \cref{thm:bm2_partial_s} establishes that the iterates defined by the updates $(F',B') \overset{\cref{eq:bm2_partial}}{→}(F^{(λ)},B^{(λ)})$ always remain in $𝒮$.
It is straightforward to ensure that $F',B' ∈ 𝒮$ at initialization by setting the corresponding drifts to zero: $μ_f',υ_b'=0$, which we will assume henceforth.
As $M^{B'_{0,1}} = B'$ and $M^{F'_{0,1}} = F'$, \cref{eq:bm2_partial} can be reformulated in simpler terms:
\begin{equation}\begin{aligned}\label{eq:bm2_partial_simple}
 & F^{(λ)} ≔ \argmin_{F ∈ ℳ(Ψ_0,⋅)} λ𝕂𝕃(F \TO B') + (1 - λ)𝕂𝕃(F \TO F'), \\
 & B^{(λ)} ≔ \argmin_{B ∈ ℳ(⋅,Ψ_1)} λ𝕂𝕃(B \TO F') + (1 - λ)𝕂𝕃(B \TO B').
\end{aligned}\end{equation}

By \cref{thm:bm2_partial_r}, it suffices to solve \cref{eq:bm2_partial_simple} in the static setting,
\begin{equation}\begin{aligned}\label{eq:bm2_partial_static}
 & F^{(λ)}_{0,1} ≔ \argmin_{F_{0,1} ∈ 𝒞(Ψ_0,⋅)} λ𝕂𝕃(F_{0,1} \TO B'_{0,1}) + (1 - λ)𝕂𝕃(F_{0,1} \TO F'_{0,1}), \\
 & B^{(λ)}_{0,1} ≔ \argmin_{B_{0,1} ∈ 𝒞(⋅,Ψ_1)} λ𝕂𝕃(B_{0,1} \TO F'_{0,1}) + (1 - λ)𝕂𝕃(B_{0,1} \TO B'_{0,1}).
\end{aligned}\end{equation}
The dynamic solutions are then recovered by $F^{(λ)}_{{}|0,1} = B^{(λ)}_{{}|0,1} = R_{{}|0,1}$.

We assume that $F'_{0,1}, B'_{0,1}, Ψ_0, Ψ_1$ admit densities.
By calculus of variations, the solution to \cref{eq:bm2_partial_static} is given by $f^{(λ)}_{0,1}(x_0,x_1) = ψ_0(x_0)f^{(λ)}_{1|0}(x_1|x_0)$, and $b^{(λ)}_{0,1}(x_0,x_1) = b^{(λ)}_{0|1}(x_0|x_1)ψ_1(x_1)$, where $f^{(λ)}_{1|0}(x_1|x_0) ∝ b'_{1|0}(x_1|x_0)^λ f'_{1|0}(x_1|x_0)^{1-λ}$ and $b^{(λ)}_{0|1}(x_0|x_1) ∝ f'_{0|1}(x_0|x_1)^λ b'_{0|1}(x_0|x_1)^{1-λ}$.
The IPF iterations are recovered when $λ=1$.
Instead, taking the limit $λ → 0$ and applying Bayes theorem, we obtain the evolution of $\log f^{(l)}_{1|0}(x_1|x_0)$ and $\log b^{(l)}_{0|1}(x_0|x_1)$ as a function of algorithmic time $l ∈ [0, ∞)$ through the dynamical system
\begin{equation}\begin{aligned}\label{eq:bm2_partial_static_dyn_2}
 & \frac{d\log f^{(l)}_{1|0}(x_1|x_0)}{dl} = -\log\frac{f^{(l)}_{1|0}(x_1|x_0)}{b^{(l)}_{0|1}(x_0|x_1)ψ_1(x_1)} + \overline{𝕂𝕃}(f^{(l)}_{1|0}(x_1|x_0) \TO b^{(l)}_{0|1}(x_0|x_1)ψ_1(x_1)),\quad l∈[0,∞), \\
 & \frac{d\log b^{(l)}_{0|1}(x_0|x_1)}{dl} = -\log\frac{b^{(l)}_{0|1}(x_0|x_1)}{f^{(l)}_{1|0}(x_1|x_0)ψ_0(x_0)} + \overline{𝕂𝕃}(b^{(l)}_{0|1}(x_0|x_1) \TO f^{(l)}_{1|0}(x_1|x_0)ψ_0(x_0)),\quad l∈[0,∞).
\end{aligned}\end{equation}
In \cref{eq:bm2_partial_static_dyn_2}, $\overline{𝕂𝕃}(□ \TO □)$ denotes the generalized KL divergence between unnormalized densities, as is the case here for the second arguments, and the initial conditions $f^{(0)}_{1|0}(x_1|x_0)$ and $b^{(0)}_{0|1}(x_0|x_1)$ are determined by \cref{eq:bm2_f,eq:bm2_b} with null drift terms.
\cref{eq:bm2_partial_static_dyn_2} can be contrasted with \citet[Equation (13)]{karimi2023sinkhorn}.
In \cref{sec:bm2_inf_gauss} we report a simple numerical application of \cref{eq:bm2_partial_static_dyn_2} to the Gaussian setting, which recovers $S$.

\section{Numerical Experiments}\label{sec:numerical}

To evaluate the performance of BM$^2$ on EOT problems, we utilize the benchmark developed by \citet{gushchin2023building}.
For the reference process \cref{eq:ref_sde}, this benchmark provides pairs of target distributions $Ψ_0,Ψ_1$ with analytical EOT solution $S_{0,1}$ and analytical SB-optimal drift function $μ_s$.
We focus on the mixtures benchmark, which consist of a centered Gaussian distribution as $S_0 = Ψ_0$ and a mixture of 5 Gaussian distributions for $S_{1|0}$.
$S_1 = Ψ_1$ is not a mixture of Gaussian distributions, but has 5 distinct modes.
The benchmark is constructed for dimensions $d ∈ \{2, 16, 64, 128\}$ and entropic regularization parameters $ε ∈ \{0.1, 1, 10\}$.

For each fully trained method, characterized by a stochastic process distribution $P$ and forward drift function $μ_p$, we assess performance using two evaluation metrics:
\begin{itemize}
\item $𝕂𝕃(S \TO P)$ where, by Girsanov theorem \citep{oksendal2013stochastic},
      \begin{equation}\label{eq:metric_kl}
      𝕂𝕃(S \TO P) = 𝔼_{S}\Big[\frac{1}{2σ^2}∫_0^1 ‖μ_s(X_t,t) - μ_p(X_t, t)‖^2dt\Big];
      \end{equation}
\item $\text{c}\text{B}𝕎_2^2\text{-UVP}(S_{0,1},P_{0,1})$, where
      \begin{equation}\label{eq:metric_cbw}
      \text{c}\text{B}𝕎_2^2\text{-UVP}(S_{0,1},P_{0,1}) ≔ \frac{100}{\frac{1}{2}𝕍_S[X_1]} ∫\text{B}𝕎_2^2(S_{1|0}(X_1|X_0),P_{1|0}(X_1|X_0))S_0(dX_0),
      \end{equation}
      $\text{B}𝕎_2^2(□,□)$ is the squared Bures-Wasserstein distance, i.e.\ the squared Wasserstein-2 distance between (assumed) multivariate Gaussian distributions \citep{dowson1982frechet}, and $𝕍_S[X_1]$ is the variance of $X_1 ∼ S_1$.
\end{itemize}

We focus on the divergence $𝕂𝕃(S \TO P)$, rather than $𝕂𝕃(P \TO S)$, as a low $𝕂𝕃(S \TO P)$ more accurately indicates that $P$ approximates $S$ effectively across the entire support of $S$.
The data-processing inequality implies that $𝕂𝕃(S_{0,1} \TO P_{0,1}) ≤ 𝕂𝕃(S \TO P)$.
The $\text{c}\text{B}𝕎_2^2\text{-UVP}(□,□)$ metric, introduced by \citet{gushchin2023building}, is a normalized and conditional extension of the standard $\text{B}𝕎_2^2(□,□)$ distance.
Results for evaluation metrics \cref{eq:metric_kl} and \cref{eq:metric_cbw} are summarized in \cref{tab:kl_results} and \cref{tab:cbw_results}, respectively.

\begin{table}
\ra{1.2}
\centering
\small
\begin{tabular}{@{}l*{12}r@{}}\toprule
         & \multicolumn{4}{c}{$ε \feq 0.1$} & \multicolumn{4}{c}{$ε \feq 1$} & \multicolumn{4}{c}{$ε \feq 10$}                                                                                                                                                                                                         \\
\cmidrule(lr){2-5} \cmidrule(lr){6-9} \cmidrule(l){10-13}
Method   & $d \feq 2$                       & $d \feq 16$                    & $d \feq 64$                     & $d \feq 128$        & $d \feq 2$          & $d \feq 16$         & $d \feq 64$         & $d \feq 128$        & $d \feq 2$          & $d \feq 16$         & $d \feq 64$          & $d \feq 128$         \\
\midrule
BM$^2$   & \bo\fms{0.01}{0.01}              & \bo\fms{0.20}{0.02}            & \bo\fms{1.03}{0.07}             & \bo\fms{3.06}{0.16} & \bo\fms{0.01}{0.00} & \bo\fms{0.11}{0.00} & \bo\fms{1.43}{0.03} & \fms{8.29}{0.36}    & \bo\fms{0.11}{0.01} & \bo\fms{2.25}{0.04} & \bo\fms{13.13}{0.13} & \bo\fms{40.46}{0.49} \\
BM$^2_σ$ & \fms{0.43}{0.09}                 & \fms{3.76}{0.46}               & \fms{39.55}{1.96}               & \fms{127.2}{1.4}    & \fms{0.04}{0.01}    & \fms{0.43}{0.03}    & \fms{5.36}{0.35}    & \fms{18.66}{0.73}   & \fms{0.15}{0.00}    & \fms{2.64}{0.05}    & \fms{13.78}{0.24}    & \fms{43.42}{1.43}    \\
\midrule[0.25pt]
I-BM     & \fms{0.03}{0.01}                 & \fms{0.20}{0.02}               & \fms{1.24}{0.04}                & \fms{5.70}{0.42}    & \fms{0.01}{0.00}    & \fms{0.16}{0.01}    & \fms{1.94}{0.04}    & \bo\fms{7.79}{0.07} & \fms{0.16}{0.00}    & \fms{4.09}{0.03}    & \fms{17.17}{0.21}    & \fms{49.17}{0.55}    \\
DIPF     & \fms{0.59}{0.14}                 & \fms{2.39}{0.05}               & \fms{7.93}{1.23}                & \fms{34.77}{0.82}   & \fms{0.23}{0.06}    & \fms{1.21}{0.18}    & \fms{13.13}{0.79}   & \fms{36.51}{1.05}   & \fms{0.81}{0.06}    & \fms{28.25}{2.12}   & \fms{113.8}{7.2}     & \fms{345.8}{8.1}     \\
\bottomrule
\end{tabular}
\caption{Monte Carlo estimate of $𝕂𝕃(S \TO P)$ as function of $ε$ and $d$, standard deviation in gray.}
\label{tab:kl_results}
\end{table}

In our benchmarking, we compare BM$^2$ against the I-BM and DIPF methods (\cref{sec:bm_iterated}).
Each experiment is repeated five times, including both model training and metric evaluation, to obtain uncertainty quantification.
We use $1,000$ Monte Carlo samples to estimate \cref{eq:metric_kl,eq:metric_cbw}.
For simplicity, we employ the Euler–Maruyama scheme \citep{kloeden1992numerical} with $200$ discretization steps ($Δt = 0.005$) in all path sampling procedures.
Each method undergoes $50,000$ SGD training steps with a batch size of $1,000$, settings similar to those used by \citet{gushchin2023building}, enabling qualitative comparison of our results with theirs.
We use the AdamW optimizer with a learning rate of $10^{-4}$ and hyperparameters: $β=(0.9, 0.999), ϵ=10^{-8}, wd=0.01$, where $wd$ denotes weight decay.
Time is sampled as $t ∼ 𝒰(ϵ, 1 - ϵ)$ for $ϵ = 0.0025$.

For BM$^2$, we employ a single feedforward neural network with $3$ layers of width $768$ and ReLU activation, resulting in approximately $1$ million parameters.
We initialize the neural network parameters such that the resulting initial forward and backward drift functions evaluate to zero everywhere, a choice that has proven effective in our experiments.
As mentioned in \cref{sec:bm2_implementation}, we implement path caching and an exponential moving average for parameters used in path sampling.
The cache contains $5,000$ initial-terminal values from both \cref{eq:bm2_f} and \cref{eq:bm2_b}, refreshed every $200$ training steps.
We omit reporting the results for both the two-stage training procedure and the consistency loss described in \cref{sec:bm2_implementation}, as neither approach demonstrated consistent performance improvements across the considered benchmark.

For I-BM and DIPF, each outer loop iteration comprises $5,000$ SGD steps, totaling $10$ outer loop (algorithmic) iterations.
Following best practices \citep{bortoli2021diffusion,shi2023diffusiona}, we alternate time directions over iterations for both algorithms.
Each method employs two separate neural networks for forward and backward time directions, maintaining a total parameter count close to $1$ million, matching BM$^2$'s model size.
As with BM$^2$, we implement path caching (for DIPF, entire discretized paths are cached) and EMA for sampling, with an EMA decay rate of $0.99$.

We also consider BM$^2_σ$, a variant of BM$^2$ that learns Schrödinger bridges for $Ψ_0,Ψ_1$ across multiple $σ$ values.
This amortized version leverages BM$^2$'s non-iterative nature.
At each optimization step, $σ$ is sampled from $𝒰(0.1, 4)$ and utilized in discretizing SDEs \cref{eq:bm2_f,eq:bm2_b} (lines 3 and 5 of \cref{alg:bm2_obj}) and in bridge sampling (lines 7 and 8 of \cref{alg:bm2_obj}).
The neural network implementing drift functions $μ_f(x,t,θ)$ and $υ_b(x,t,θ)$ is modified to accept $σ$ as an additional input, resulting in conditional drift functions $μ_f(x,t,θ,σ)$ and $υ_b(x,t,θ,σ)$.
Path caching is adjusted to store $σ$ values corresponding to cached paths.

In \cref{tab:cbw_results}, we additionally include three baselines.
EOT:\@ sampling from the EOT solution, accounting for the bias due to Monte Carlo estimation.
SB(discr): sampling from the SB solution via the SB-optimal drift $μ_s$, additionally accounting for Euler–Maruyama scheme discretization error.
$Ψ_0{⊗}Ψ_1$: sampling from the independent coupling.
Results for additional discretization intervals are reported in \cref{sec:cbw_discretizations}, and we illustrate in \cref{fig:training} the evolution of \cref{eq:metric_cbw} during training for a representative benchmark setting.

\begin{figure}[H]
\centering
\includegraphics{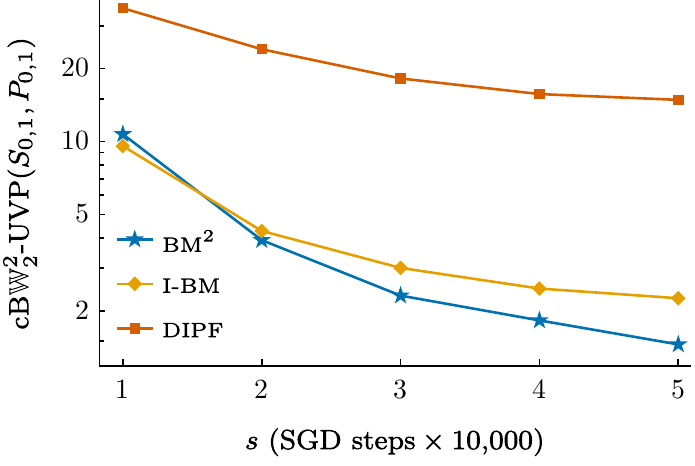}
\caption{Evolution of the metric \cref{eq:metric_cbw} for $d=64$, $ε=1$ over SGD steps for BM$^2$, I-BM and DIPF.}\label{fig:training}
\end{figure}

We now discuss the results presented in \cref{tab:kl_results,tab:cbw_results}.
BM$^2$ demonstrates superior overall performance across dimensions and entropic regularization settings in both metrics.
I-BM also shows good performance, particularly in comparison to the DIPF procedure, which aligns with the findings of \citet{shi2023diffusiona}.

As expected, the performance of all methods deteriorates as the number of dimensions increases.
This is because the metric\cref{eq:metric_kl} scales linearly with the number of dimensions, assuming a constant error rate in estimating each component of the true drift $μ_s$.
Similar considerations apply to the metric \cref{eq:metric_cbw}.

While BM$^2_σ$ exhibits a performance gap compared to BM$^2$, it yields reasonable results in low-dimensional settings ($d=2,16$).
This gap may be due to increased pressure on model capacity or the need to normalize loss levels across $σ$ values.
All methods perform poorly in the high regularization setting ($ε=10$), especially in high dimensions ($d=64,128$), which we include for completeness.
It should be noted that, in such cases, sampling from the independent coupling (a trivial solution) is preferable to sampling from the SB-optimal SDE for the chosen discretization interval.

\begin{table}
\ra{1.2}
\centering
\small
\begin{tabular}{@{}l*{12}r@{}}\toprule
            & \multicolumn{4}{c}{$ε \feq 0.1$} & \multicolumn{4}{c}{$ε \feq 1$} & \multicolumn{4}{c}{$ε \feq 10$}                                                                                                                                                                                                        \\
\cmidrule(lr){2-5} \cmidrule(lr){6-9} \cmidrule(l){10-13}
Method      & $d \feq 2$                       & $d \feq 16$                    & $d \feq 64$                     & $d \feq 128$        & $d \feq 2$          & $d \feq 16$         & $d \feq 64$         & $d \feq 128$        & $d \feq 2$          & $d \feq 16$         & $d \feq 64$          & $d \feq 128$        \\
\midrule
EOT         & \fms{0.02}{0.00}                 & \fms{0.05}{0.00}               & \fms{0.34}{0.00}                & \fms{0.91}{0.00}    & \fms{0.09}{0.00}    & \fms{0.17}{0.00}    & \fms{0.43}{0.00}    & \fms{1.14}{0.00}    & \fms{0.12}{0.00}    & \fms{0.18}{0.00}    & \fms{0.23}{0.00}     & \fms{0.38}{0.00}    \\
SB(discr.)  & \fms{0.04}{0.00}                 & \fms{0.07}{0.00}               & \fms{0.35}{0.00}                & \fms{0.92}{0.00}    & \fms{0.10}{0.00}    & \fms{0.17}{0.00}    & \fms{0.45}{0.00}    & \fms{1.18}{0.00}    & \fms{0.12}{0.00}    & \fms{1.15}{0.00}    & \fms{5.38}{0.01}     & \fms{10.48}{0.01}   \\
$Ψ_0{⊗}Ψ_1$ & \fms{195.8}{6.9}                 & \fms{186.3}{2.4}               & \fms{162.6}{0.8}                & \fms{145.1}{2.1}    & \fms{136.1}{4.7}    & \fms{127.6}{1.4}    & \fms{113.0}{1.7}    & \fms{93.61}{1.57}   & \fms{8.07}{0.33}    & \fms{4.88}{0.14}    & \fms{4.22}{0.09}     & \fms{4.45}{0.07}    \\
\midrule[0.25pt]
BM$^2$      & \bo\fms{0.73}{0.40}              & \fms{4.64}{0.58}               & \bo\fms{6.84}{0.59}             & \bo\fms{8.28}{0.62} & \bo\fms{0.14}{0.03} & \bo\fms{0.41}{0.04} & \bo\fms{1.72}{0.09} & \fms{8.30}{1.17}    & \bo\fms{0.14}{0.01} & \bo\fms{2.30}{0.04} & \bo\fms{41.14}{1.99} & \fms{264.4}{7.0}    \\
BM$^2_σ$    & \fms{8.38}{3.40}                 & \fms{16.06}{2.81}              & \fms{44.15}{0.84}               & \fms{83.84}{0.92}   & \fms{0.20}{0.07}    & \fms{2.61}{0.41}    & \fms{25.89}{1.65}   & \fms{64.76}{2.66}   & \fms{0.14}{0.01}    & \fms{2.57}{0.03}    & \fms{58.76}{0.76}    & \fms{323.0}{8.8}    \\
\midrule[0.25pt]
I-BM        & \fms{1.07}{0.50}                 & \bo\fms{4.25}{0.66}            & \fms{7.19}{0.28}                & \fms{16.63}{2.07}   & \fms{0.20}{0.09}    & \fms{0.53}{0.04}    & \fms{2.20}{0.35}    & \bo\fms{7.79}{0.79} & \fms{0.14}{0.02}    & \fms{5.21}{0.11}    & \fms{135.8}{1.3}     & \fms{578.7}{9.8}    \\
DIPF        & \fms{7.82}{2.51}                 & \fms{15.30}{1.00}              & \fms{20.12}{1.53}               & \fms{29.36}{1.02}   & \fms{1.66}{0.24}    & \fms{5.98}{0.65}    & \fms{13.11}{2.49}   & \fms{28.86}{3.52}   & \fms{0.69}{0.08}    & \fms{6.85}{0.21}    & \fms{72.63}{0.70}    & \bo\fms{226.1}{1.1} \\
\bottomrule
\end{tabular}
\caption{Monte Carlo estimate of $\text{cB}𝕎^2_2\text{-UVP}(S_{0,1},P_{0,1})$ as function of $ε$ and $d$, standard deviation in gray.}
\label{tab:cbw_results}
\end{table}

\section{Related Works}\label{sec:related_works}

Relevant works that, like BM$^2$, address the \emph{dynamic} Schrödinger bridge problem \cref{eq:sb_problem} include:

\textbf{Schrödinger Bridge Flow}: in a concurrent work, \citet{debortoli2024schrodinger} propose a non-iterative methodology called $α$-DSBM (Diffusion Schrödinger Bridge Matching).
$α$-DSBM is optimally implemented through a forward-backward SDE approach, in which case $α$-DSBM's formulation and training objective align with our proposal \cref{eq:bm2_f,eq:bm2_b,eq:bm2_losses}, see \citet[Equations (10) and (11)]{debortoli2024schrodinger}.
The formulation of $α$-DSBM begins by establishing a probability flow in the space of path probability measures, whose discretization yields $α$-IMF (Iterative Markovian Fitting).
Under mild conditions, two theoretical results are established: (i) the $α$-IMF iterates converge to the SB solution, and (ii) the non-parametric updates of the functional loss \cref{eq:bm2_losses_fun}, obtained through functional gradient descent with respect to the drift functions, recover the $α$-IMF iterates.
The practical implementation, $α$-DSBM, thus replaces non-parametric drift functions with parametric neural network approximators, and employs standard stochastic gradient descent on the parametric loss \cref{eq:bm2_losses}.
This theoretical framework provides strong convergence guarantees for $α$-DSBM and paves the way to further theoretical developments.
Furthermore, \citet{debortoli2024schrodinger} demonstrate the method's effectiveness through extensive numerical experiments on high-dimensional computer vision problems, complementing our synthetic benchmark results.

\textbf{I-BM and DIPF}: closely related to BM$^2$ are the iterative, sample-based DIPF \citep{bortoli2021diffusion,vargas2021solving} and I-BM \citep{shi2023diffusiona,peluchetti2023diffusion} procedures, which do not satisfy desiderata (i).
Built on similar bridge matching principles, BM$^2$ can be viewed as a modification of I-BM that employs a single optimization loop, resulting in a simpler algorithm that we have empirically shown to be competitive.

\textbf{Forward-Backward SB SDE}: \citet{chen2022likelihood} propose two training algorithms addressing the dynamic SB problem.
Both approaches employ loss functions that require divergence computations (violating desiderata (iv)) and the use of two distinct neural networks.
The first method is iterative, resembling DIPF (violating desiderata (i)), while the second method involves differentiating through entire discretized paths, resulting in high memory consumption (violating desiderata (iii)).

The subsequent works concentrate on solving the \emph{static} Schrödinger bridge \cref{eq:sb_static_problem}, or EOT \cref{eq:eot_problem}, problem.
Once this is achieved, solutions to the dynamic problem are trivially obtained through the standard decomposition $S = S_{0,1}R_{{}|0,1}$.
Although these works differ in nature and objectives, we include them here due to their shared characteristic with BM$^2$: the non-iterative nature of the algorithm.

\textbf{Light SB}: in two notable works, \citet{korotin2023light} and \citet{gushchin2024lighta} propose non-iterative, sample-based EOT solvers for the Euclidean cost function, i.e., for the specific choice of reference dynamics \cref{eq:ref_sde}.
\citet{korotin2023light} introduces an approximation to (an adjusted version of) the Schrödinger potential for $Ψ_1$ via a mixture of Gaussian distributions, resulting in a mixture of Gaussian distributions approximation to $S_{1|0}$.
\citet{gushchin2024lighta} builds upon this approximation and introduces an additional sample-based training objective that takes as input any coupling $C_{0,1} ∈ 𝒞(Ψ_0,Ψ_1)$, whereas \citet{korotin2023light} requires the independent coupling $Ψ_0{⊗}Ψ_0$.
While also non-iterative, the proposals of \citet{korotin2023light,gushchin2024lighta} differ from BM$^2$ in two key aspects: (a) they learn a solution in the static setting instead of the dynamic one, and (b) they employ mixture of Gaussian distributions approximations, rather than neural network approximators for the drift functions.
Consequently, these methods may face challenges in scaling to modern generative ML applications.
Light SB, in both variants, demonstrates strong performance in the benchmark presented in \cref{sec:numerical}: the results of \citep[Table 1]{gushchin2024lighta} and \citep[Table 2]{korotin2023light} are directly comparable with the results of \cref{tab:cbw_results}.
However, it is worth noting that this benchmark is particularly well-suited for Light SB, as acknowledged by its authors, since each benchmark target $S_{0,1}$ is constructed such that $S_{1|0}$ is itself a mixture of 5 Gaussian distributions.

\section{Conclusions}\label{sec:conclusions}

In this work we introduced Coupled Bridge Matching (BM$^2$), a novel approach for learning Schrödinger bridges from samples.
BM$^2$ builds on the principles of Bridge Matching while addressing key limitations of existing iterative methods.
Our approach offers several advantages, including a simple single-loop optimization procedure, exactness in the idealized setting, modest memory requirements, and a straightforward loss function.
The numerical experiments demonstrate that BM$^2$ is competitive with and often outperforms existing iterative diffusion-based methods like I-BM and DIPF across various dimensions and entropic regularization settings.

On the theoretical front, there is substantial room for improvement.
Firstly, while bearing some resemblance to the standard convergence result for the EM algorithm, \cref{thm:bm2partial} lacks a quantity analogous to the likelihood being maximized in the EM algorithm.
It remains unclear whether decreases in $𝕂𝕃(M^{B_{0,1}(θ')} \TO F(θ))$ and $ 𝕂𝕃(M^{F_{0,1}(θ')} \TO B(θ))$ can be linked to decreases in $𝕂𝕃(F(θ) \TO S)$ and $ 𝕂𝕃(B(θ) \TO S)$.
Secondly, the requirement that $F(θ) = B(θ)$, equivalently that \cref{eq:bm2_f} and \cref{eq:bm2_b} are time-reversals of each other, appears unnecessary.
Notably, all numerical simulations conducted do not explicitly enforce this condition, which emerges naturally during the training process.
Thirdly, it would be valuable to study problem \cref{eq:bm2_partial} where reverse KL divergences are partially minimized, aligning more closely with the BM$^2$ algorithm.
In this scenario, \cref{thm:bm2_partial_s} no longer holds, and it may be necessary to impose a corresponding additional constraint to maintain tractable analytical computations.
The attractors of \cref{eq:bm2_partial_static_dyn_2}, and of a corresponding dynamical system arising from reverse KL minimization, can be investigated to assess further convergence properties of BM$^2$.

On the empirical front, the applications of BM$^2$ in contemporary generative machine learning tasks remain unexplored.
Given the promising results from previous studies employing Bridge Matching, such as those by \citet{liu2023sb} and \citet{somnath23aligned}, it is anticipated that BM$^2$ could be effectively applied to various domains, including image generation, audio synthesis, and molecular design.
Future work could investigate the scalability and performance of BM$^2$ in these domains.

\clearpage
\bibliography{main}

\begin{thebibliography}{30}
\providecommand{\natexlab}[1]{#1}
\providecommand{\url}[1]{\texttt{#1}}
\expandafter\ifx\csname urlstyle\endcsname\relax
  \providecommand{\doi}[1]{doi: #1}\else
  \providecommand{\doi}{doi: \begingroup \urlstyle{rm}\Url}\fi

\bibitem[Anderson(1982)]{anderson1982reversetime}
Brian~D.O. Anderson.
\newblock Reverse-time diffusion equation models.
\newblock \emph{Stochastic Processes and their Applications}, 12\penalty0 (3):\penalty0 313--326, 1982.
\newblock ISSN 03044149.

\bibitem[Bortoli et~al.(2021)Bortoli, Thornton, Heng, and Doucet]{bortoli2021diffusion}
Valentin~De Bortoli, James Thornton, Jeremy Heng, and Arnaud Doucet.
\newblock Diffusion {{Schrödinger Bridge}} with {{Applications}} to {{Score-Based Generative Modeling}}.
\newblock In \emph{Thirty-{{Fifth Conference}} on {{Neural Information Processing Systems}}}, 2021.

\bibitem[Chen et~al.(2022)Chen, Liu, and Theodorou]{chen2022likelihood}
Tianrong Chen, Guan-Horng Liu, and Evangelos Theodorou.
\newblock Likelihood {{Training}} of {{Schrödinger Bridge}} using {{Forward-Backward SDEs Theory}}.
\newblock In \emph{International {{Conference}} on {{Learning Representations}}}, 2022.

\bibitem[De~Bortoli et~al.(2024)De~Bortoli, Korshunova, Mnih, and Doucet]{debortoli2024schrodinger}
Valentin De~Bortoli, Iryna Korshunova, Andriy Mnih, and Arnaud Doucet.
\newblock {{Schrödinger}} {Bridge} {Flow} for {Unpaired} {Data} {Translation}, September 2024.

\bibitem[Dempster et~al.(1977)Dempster, Laird, and Rubin]{dempster1977maximum}
A.~P. Dempster, N.~M. Laird, and D.~B. Rubin.
\newblock Maximum {{Likelihood}} from {{Incomplete Data Via}} the {{EM Algorithm}}.
\newblock \emph{Journal of the Royal Statistical Society: Series B (Methodological)}, 39\penalty0 (1):\penalty0 1--22, 1977.
\newblock ISSN 0035-9246.

\bibitem[Dowson \& Landau(1982)Dowson and Landau]{dowson1982frechet}
D.~C Dowson and B.~V Landau.
\newblock The {{Fréchet}} distance between multivariate normal distributions.
\newblock \emph{Journal of Multivariate Analysis}, 12\penalty0 (3):\penalty0 450--455, 1982.
\newblock ISSN 0047-259X.

\bibitem[Fortet(1940)]{fortet1940resolution}
Robert Fortet.
\newblock Résolution d’un systeme d’équations de {{M}}. {{Schrödinger}}.
\newblock \emph{J. Math. Pure Appl. IX}, 1:\penalty0 83--105, 1940.

\bibitem[Gushchin et~al.(2023)Gushchin, Kolesov, Mokrov, Karpikova, Spiridonov, Burnaev, and Korotin]{gushchin2023building}
Nikita Gushchin, Alexander Kolesov, Petr Mokrov, Polina Karpikova, Andrei Spiridonov, Evgeny Burnaev, and Alexander Korotin.
\newblock Building the {{Bridge}} of {{Schrödinger}}: {{A Continuous Entropic Optimal Transport Benchmark}}.
\newblock In \emph{Thirty-Seventh {{Conference}} on {{Neural Information Processing Systems Datasets}} and {{Benchmarks Track}}}, 2023.

\bibitem[Gushchin et~al.(2024)Gushchin, Kholkin, Burnaev, and Korotin]{gushchin2024lighta}
Nikita Gushchin, Sergei Kholkin, Evgeny Burnaev, and Alexander Korotin.
\newblock Light and {{Optimal Schrödinger Bridge Matching}}.
\newblock In \emph{Forty-First {{International Conference}} on {{Machine Learning}}}, 2024.

\bibitem[Ho et~al.(2020)Ho, Jain, and Abbeel]{ho2020denoising}
Jonathan Ho, Ajay Jain, and Pieter Abbeel.
\newblock Denoising {{Diffusion Probabilistic Models}}.
\newblock arXiv, 2020.

\bibitem[Jamison(1974)]{jamison1974reciprocal}
Benton Jamison.
\newblock Reciprocal processes.
\newblock \emph{Zeitschrift für Wahrscheinlichkeitstheorie und Verwandte Gebiete}, 30\penalty0 (1):\penalty0 65--86, 1974.
\newblock ISSN 1432-2064.

\bibitem[Jamison(1975)]{jamison1975markov}
Benton Jamison.
\newblock The {{Markov}} processes of {{Schrödinger}}.
\newblock \emph{Zeitschrift für Wahrscheinlichkeitstheorie und Verwandte Gebiete}, 32\penalty0 (4):\penalty0 323--331, 1975.
\newblock ISSN 1432-2064.

\bibitem[Karimi et~al.(2023)Karimi, Hsieh, and Krause]{karimi2023sinkhorn}
Mohammad~Reza Karimi, Ya-Ping Hsieh, and Andreas Krause.
\newblock Sinkhorn {{Flow}}: {{A Continuous-Time Framework}} for {{Understanding}} and {{Generalizing}} the {{Sinkhorn Algorithm}}, 2023.

\bibitem[Kloeden \& Platen(1992)Kloeden and Platen]{kloeden1992numerical}
Peter~E. Kloeden and Eckhard Platen.
\newblock \emph{Numerical {{Solution}} of {{Stochastic Differential Equations}}}.
\newblock Springer Berlin Heidelberg, 1992.

\bibitem[Korotin et~al.(2023)Korotin, Gushchin, and Burnaev]{korotin2023light}
Alexander Korotin, Nikita Gushchin, and Evgeny Burnaev.
\newblock Light {{Schrödinger Bridge}}.
\newblock In \emph{The {{Twelfth International Conference}} on {{Learning Representations}}}, 2023.

\bibitem[Liu et~al.(2023)Liu, Vahdat, Huang, Theodorou, Nie, and Anandkumar]{liu2023sb}
Guan-Horng Liu, Arash Vahdat, De-An Huang, Evangelos Theodorou, Weili Nie, and Anima Anandkumar.
\newblock I$^2${{SB}}: {{Image-to-Image Schrödinger Bridge}}.
\newblock In \emph{Proceedings of the 40th {{International Conference}} on {{Machine Learning}}}, pp.\  22042--22062. PMLR, 2023.

\bibitem[Liu et~al.(2022)Liu, Wu, Ye, and Liu]{liu2022let}
Xingchao Liu, Lemeng Wu, Mao Ye, and Qiang Liu.
\newblock Let us {{Build Bridges}}: {{Understanding}} and {{Extending Diffusion Generative Models}}.
\newblock In \emph{{{NeurIPS}} 2022 {{Workshop}} on {{Score-Based Methods}}}, 2022.

\bibitem[Léonard(2014{\natexlab{a}})]{leonard2014properties}
Christian Léonard.
\newblock Some {{Properties}} of {{Path Measures}}.
\newblock In Catherine {Donati-Martin}, Antoine Lejay, and Alain Rouault (eds.), \emph{Séminaire de {{Probabilités XLVI}}}, Lecture {{Notes}} in {{Mathematics}}, pp.\  207--230. Springer International Publishing, 2014{\natexlab{a}}.

\bibitem[Léonard(2014{\natexlab{b}})]{leonard2014survey}
Christian Léonard.
\newblock A survey of the {{Schrödinger}} problem and some of its connections with optimal transport.
\newblock \emph{Discrete \& Continuous Dynamical Systems}, 34\penalty0 (4):\penalty0 1533, 2014{\natexlab{b}}.

\bibitem[Mallasto et~al.(2022)Mallasto, Gerolin, and Minh]{mallasto2022entropyregularized}
Anton Mallasto, Augusto Gerolin, and Hà~Quang Minh.
\newblock Entropy-regularized 2-{{Wasserstein}} distance between {{Gaussian}} measures.
\newblock \emph{Information Geometry}, 5\penalty0 (1):\penalty0 289--323, 2022.
\newblock ISSN 2511-249X.

\bibitem[Marzouk et~al.(2016)Marzouk, Moselhy, Parno, and Spantini]{marzouk2016sampling}
Youssef Marzouk, Tarek Moselhy, Matthew Parno, and Alessio Spantini.
\newblock Sampling via {{Measure Transport}}: {{An Introduction}}.
\newblock In Roger Ghanem, David Higdon, and Houman Owhadi (eds.), \emph{Handbook of {{Uncertainty Quantification}}}, pp.\  1--41. Springer International Publishing, 2016.

\bibitem[Peluchetti(2021)]{peluchetti2021nondenoising}
Stefano Peluchetti.
\newblock Non-{{Denoising Forward-Time Diffusions}}.
\newblock 2021.

\bibitem[Peluchetti(2023)]{peluchetti2023diffusion}
Stefano Peluchetti.
\newblock Diffusion {{Bridge Mixture Transports}}, {{Schrödinger Bridge Problems}} and {{Generative Modeling}}.
\newblock \emph{Journal of Machine Learning Research}, 24\penalty0 (374):\penalty0 1--51, 2023.
\newblock ISSN 1533-7928.

\bibitem[Peyré \& Cuturi(2020)Peyré and Cuturi]{peyre2020computational}
Gabriel Peyré and Marco Cuturi.
\newblock Computational {{Optimal Transport}}.
\newblock 2020.

\bibitem[Ruschendorf(1995)]{ruschendorf1995convergence}
Ludger Ruschendorf.
\newblock Convergence of the {{Iterative Proportional Fitting Procedure}}.
\newblock \emph{The Annals of Statistics}, 23\penalty0 (4):\penalty0 1160--1174, 1995.
\newblock ISSN 0090-5364.

\bibitem[Shi et~al.(2023)Shi, Bortoli, Campbell, and Doucet]{shi2023diffusiona}
Yuyang Shi, Valentin~De Bortoli, Andrew Campbell, and Arnaud Doucet.
\newblock Diffusion {{Schrödinger Bridge Matching}}.
\newblock In \emph{Thirty-Seventh {{Conference}} on {{Neural Information Processing Systems}}}, 2023.

\bibitem[Somnath et~al.(2023)Somnath, Pariset, Hsieh, Martinez, Krause, and Bunne]{somnath23aligned}
Vignesh~Ram Somnath, Matteo Pariset, Ya-Ping Hsieh, Maria~Rodriguez Martinez, Andreas Krause, and Charlotte Bunne.
\newblock Aligned diffusion {S}chrödinger bridges.
\newblock In Robin~J. Evans and Ilya Shpitser (eds.), \emph{Proceedings of the Thirty-Ninth Conference on Uncertainty in Artificial Intelligence}, volume 216 of \emph{Proceedings of Machine Learning Research}, pp.\  1985--1995. PMLR, 31 Jul--04 Aug 2023.

\bibitem[Song et~al.(2021)Song, {Sohl-Dickstein}, Kingma, Kumar, Ermon, and Poole]{song2021scorebased}
Yang Song, Jascha {Sohl-Dickstein}, Diederik~P Kingma, Abhishek Kumar, Stefano Ermon, and Ben Poole.
\newblock Score-{{Based Generative Modeling}} through {{Stochastic Differential Equations}}.
\newblock In \emph{International {{Conference}} on {{Learning Representations}}}, 2021.

\bibitem[Vargas et~al.(2021)Vargas, Thodoroff, Lamacraft, and Lawrence]{vargas2021solving}
Francisco Vargas, Pierre Thodoroff, Austen Lamacraft, and Neil Lawrence.
\newblock Solving {{Schrödinger Bridges}} via {{Maximum Likelihood}}.
\newblock \emph{Entropy}, 23\penalty0 (9):\penalty0 1134, 2021.

\bibitem[Øksendal(2013)]{oksendal2013stochastic}
B.~K. Øksendal.
\newblock \emph{Stochastic {{Differential Equations}}: {{An Introduction}} with {{Applications}}}.
\newblock Universitext. Springer, 6th ed., 6th corrected printing edition, 2013.

\end{thebibliography}
\bibliographystyle{tmlr}

\clearpage
\appendix

\section{Additional Dynamics}\label{sec:additional_dynamics}

In this section we consider a simple extension to the dynamics of \cref{sec:reference_dynamics}, and refer the reader to \citet{peluchetti2021nondenoising,peluchetti2023diffusion} for the more general case.
Here, we consider the case where the reference distribution $R$ is given by the solution to:
\begin{equation}\label{eq:ref_sde_schedule}
X_0 ∼ Ψ_0,\quad dX_t = σ \sqrt{β_t}dW_t,\quad t∈[0,1],
\end{equation}
with $σ ≥ 0$, $β_t: [0,1] → ℝ_{>0}$ strictly positive and continuous.
With $b_{s:t} ≔ ∫_s^tβ_udu$, $0≤s≤t≤1$, $β_t$ is chosen such that $b_{0:1} = 1$, to disentangle the contribution of $β_t$ from the contribution of $σ$.
Indeed, under these conditions, $β_t$ defines a time-warping: if $X_t$ is the solution to \cref{eq:ref_sde}, then $X_{b_{0:t}}$ has the same distribution as the solution to \cref{eq:ref_sde_schedule}.
Consequently, the solutions to \cref{eq:sb_static_problem} and \cref{eq:eot_problem} are independent of $β_t$.

When employing \cref{eq:ref_sde_schedule}, the definitions in \cref{sec:reference_dynamics} are replaced as follows:
\begin{align}
 & R_{t|0}(□|x_0) = 𝒩(x_0,σ^2b_{0:t}),                                                                                          \\
 & R_{t|0,1}(□|x_0,x_1) = 𝒩(x_0b_{t:1} + x_1b_{0:t}, σ^2b_{0:t}b_{t:1}),                                                        \\
 & μ_{01}(x_t,t,x_1) ≔ σ^2 β_t ∇_{x_t} \log r_{1|t}(x_1|x_t) = \frac{β_t}{b_{t:1}}(x_1 - x_t),                                  \\
 & υ_{01}(x_t,t,x_0) ≔ σ^2β_t∇_{x_t}\log r_{t|0}(x_t|x_0) = \frac{β_t}{b_{0:t}}(x_0 - x_t),                                     \\
 & γ_{01}(x_t,t,x_0,x_1) ≔ σ^2β_t∇_{x_t}\log r_{t|01}(x_t|x_0,x_1) = \frac{β_t}{b_{0:t}b_{t:1}}(x_0b_{t:1} + x_1b_{0:t} - x_t).
\end{align}

\section{Proofs}\label{sec:proofs}

\bmtwofull*
\begin{proof}
Define $Q$ associated with
\begin{equation}\label{eq:ref_sde_bwd}
X_1 ∼ Ψ_1,\quad dX_t = σ dW_t,\quad t∈[1,0],\tag{$\overleftarrow{Q}$}
\end{equation}
which is not the time reversal of \cref{eq:ref_sde}, but $R_{{}|0,1} = Q_{{}|0,1}$.

Firstly, consider the case of initial null drifts: $μ_f^{(0)} = υ_b^{(0)} = 0$, corresponding to $F^{(0)} = Ψ_0R_{{}|0} = Ψ_0R_{1|0}R_{{}|0,1} = F^{(0)}_{0,1}R_{{}|0,1} ∈ 𝒮$ and $B^{(0)} = Ψ_1Q_{{}|1} = Ψ_1Q_{0|1}R_{{}|0,1} = B^{(0)}_{0,1}R_{{}|0,1} ∈ 𝒮$.
As $B^{(0)} = Π^{B^{(0)}_{0,1}} = M^{B^{(0)}_{0,1}}$, we have $F^{(1)} = Ψ_0M^{B^{(0)}_{0,1}}_{{}|0} = Ψ_0B^{(0)}_{{}|0} = Ψ_0B^{(0)}_{1|0}R_{{}|0,1} ∈ 𝒮$.
As $F^{(0)} = Π^{F^{(0)}_{0,1}} = M^{F^{(0)}_{0,1}}$, $B^{(1)} = Ψ_1M^{F^{(0)}_{0,1}}_{{}|1} = Ψ_1F^{(0)}_{{}|1} = Ψ_1F^{(0)}_{0|1}R_{{}|0,1} ∈ 𝒮$.
By induction, $F^{(i)} = Ψ_0B^{(i-1)}_{1|0}R_{{}|0,1} ∈ 𝒮$ and $B^{(i)} = Ψ_1F^{(i-1)}_{0|1}R_{{}|0,1} ∈ 𝒮$, $i ≥ 1$.
We now construct two forward-backward sequences.
For the sequence $F^{(0)}, B^{(1)}, F^{(2)}, …$, we have $F^{(0)}_{0,1} = Ψ_0R_{1|0}$, $B^{(1)}_{0,1} = Ψ_1F^{(0)}_{0|1}$, $F^{(2)}_{0,1} = Ψ_0B^{(1)}_{1|0}$, … which are the static IPF iterates: one marginal gets replaced at a time keeping the conditional distribution fixed.
In the same way, for $B^{(0)}, F^{(1)}, B^{(2)}, …$, we have $B^{(0)}_{0,1} = Ψ_1Q_{0|1}$, $F^{(1)}_{0,1} = Ψ_0B^{(0)}_{1|0}$, $B^{(2)}_{0,1} = Ψ_1F^{(1)}_{0|1}$, … which are again the static IPF iterates (for the backward formulation of the dynamic SB problem, i.e.\ via $\overleftarrow{Q}_{{}|0}$ as reference measure instead of $R_{{}|0}$, and switched marginal distributions).
As each pair $F^{(i)}$, $B^{(i)}$ is of the form $F^{(i)} = F^{(i)}_{0,1}R_{{}|0,1}$, $B^{(i)} = B^{(i)}_{0,1}R_{{}|0,1}$, we also recover the dynamic DIPF iterates.

Secondly, consider $μ_f^{(0)}$ and $υ_b^{(0)}$ both corresponding to the BM transport based on the given coupling: $I^{(0)}=M^{C_{0,1}}$, $F^{(0)} = B^{(0)} = I^{(0)}$.
Then, looking separately at either of the sequences $F^{(i)}$, $i ≥ 1$, and $B^{(i)}$, $i ≥ 1$, we obtain that $F^{(i)} = B^{(i)} = I^{(i)}$, $i ≥ 1$.
\end{proof}

\lossint*
\begin{proof}
We consider only $𝕃_f(μ_f;υ_b)$, the arguments for $𝕃_b(υ_b;μ_f)$ are symmetric.
By Girsanov Theorem \citep{oksendal2013stochastic} and by the marginal-conditional decomposition of Kullback-Leibler divergences we have
\begin{align*}
 & 𝕂𝕃(M^{B_{0,1}} \TO F) = 𝕂𝕃(B_0 \TO Ψ_0) + 𝔼_{Π^{B_{0,1}}}\Big[\frac{1}{2}∫_0^1‖μ_f(X_t,t) - μ_m^{B_{0,1}}(X_t,t)‖^2dt\Big],     \\
 & 𝕂𝕃(Π^{B_{0,1}} \TO F) = 𝕂𝕃(B_0 \TO Ψ_0) + 𝔼_{Π^{B_{0,1}}}\Big[\frac{1}{2}∫_0^1‖μ_f(X_t,t) - μ_π^{B_{0,1}}(X_t,t,X_0)‖^2dt\Big], \\
 & 𝕂𝕃(Π^{B_{0,1}} \TO ℱ) = 𝕂𝕃(B_0 \TO Ψ_0) + 𝔼_{Π^{B_{0,1}}}\Big[\frac{1}{2}∫_0^1‖μ_f(X_t,t) - μ_{01}(X_t,t,X_1)‖^2dt\Big]         \\
 & \quad = 𝕂𝕃(B_0 \TO Ψ_0) + 𝕃_f(μ_f;υ_b),
\end{align*}
where $μ_π^{B_{0,1}}(X_t,t,X_0) ≔ 𝔼_{Π^{B_{0,1}}}[μ_{01}(X_t,t,X_1)|X_t,X_0]$, $μ_m^{B_{0,1}}(X_t,t) ≔ 𝔼_{Π^{B_{0,1}}}[μ_{01}(X_t,t,X_1)|X_t]$, and $ℱ$ is distribution of the non-Markov diffusion solution to the auxiliary SDE
\begin{equation}\label{eq:bm2_ff}
X_0 ∼ Ψ_0,\quad dX_t = [μ_f(X_t,t) - μ_{01}(X_t,t,X_1) + μ_π^{B_{0,1}}(X_t,t,X_0)]dt + σdW_t,\quad t∈[0,1].\tag{$ℱ$}
\end{equation}
By the tower property of conditional expectations and by the conditional Jensen inequality it follows that
\begin{equation*}\begin{aligned}
 & 𝕂𝕃(Π^{B_{0,1}} \TO ℱ) - 𝕂𝕃(Π^{B_{0,1}} \TO F)                                                                                 \\
 & = 𝔼_{Π^{B_{0,1}}}\Big[\frac{1}{2}∫_0^1 ‖μ_f(X_t,t) - μ_{01}(X_t,t,X_1)‖^2 - ‖μ_f(X_t,t) - μ_π^{B_{0,1}}(X_t,t,X_0)‖^2 dt\Big] \\
 & = 𝔼_{Π^{B_{0,1}}}\Big[\frac{1}{2}∫_0^1 ‖μ_{01}(X_t,t,X_1)‖^2 - ‖μ_π^{B_{0,1}}(X_t,t,X_0)‖^2 dt\Big] = C_1(B) ≥ 0.
\end{aligned}\end{equation*}
By the Pythagorean property of the BM transport \citep{liu2022let,peluchetti2023diffusion}
\begin{equation*}
𝕂𝕃(Π^{B_{0,1}} \TO F) - 𝕂𝕃(M^{B_{0,1}} \TO F) = 𝕂𝕃(Π^{B_{0,1}} \TO M^{B_{0,1}}) = K(B) ≥ 0.
\end{equation*}
Taking $C_2(B) = C_1(B) + K(B)$ completes the proof.
\end{proof}

\stabler*
\begin{proof}
By the marginal-conditional decomposition of Kullback-Leibler divergences
\begin{equation*}\begin{aligned}
 & 𝕂𝕃(F \TO B') = 𝕂𝕃(F_{0,1} \TO B'_{0,1}) + 𝔼_{F_{0,1}}[𝕂𝕃(F_{{}|0,1} \TO B'_{{}|0,1})], \\
 & 𝕂𝕃(F \TO F') = 𝕂𝕃(F_{0,1} \TO F'_{0,1}) + 𝔼_{F_{0,1}}[𝕂𝕃(F_{{}|0,1} \TO F'_{{}|0,1})],
\end{aligned}\end{equation*}
and $B'_{{}|0,1} = F'_{{}|0,1} = R_{{}|0,1}$, hence
\begin{equation*}\begin{aligned}
 & F^{(λ)} ≔ \argmin_{F ∈ 𝒫(Ψ_0,⋅)} λ𝕂𝕃(F_{0,1} \TO B'_{0,1}) + (1 - λ)𝕂𝕃(F_{0,1} \TO F'_{0,1}) + 𝔼_{F_{0,1}}[𝕂𝕃(F_{{}|0,1} \TO R_{{}|0,1})], \\
 & B^{(λ)} ≔ \argmin_{B ∈ 𝒫(⋅,Ψ_1)} λ𝕂𝕃(B_{0,1} \TO F'_{0,1}) + (1 - λ)𝕂𝕃(B_{0,1} \TO B'_{0,1}) + 𝔼_{B_{0,1}}[𝕂𝕃(B_{{}|0,1} \TO R_{{}|0,1})], \\
\end{aligned}\end{equation*}
and thus $F^{(λ)}_{{}|0,1} = B^{(λ)}_{{}|0,1} = R_{{}|0,1}$, which completes the proof.
\end{proof}

\stables*
\begin{proof}
In view of \cref{thm:bm2_partial_r}, we have to verify that $F^{(λ)}_{0,1}, B^{(λ)}_{0,1}$ solve the EOT problems \cref{eq:eot_problem} for some marginal distributions if $F'_{0,1}, B'_{0,1}$ do.
For simplicity, we assume that all of $F^{(λ)}_{0,1}, B^{(λ)}_{0,1}, F'_{0,1}, B'_{0,1}$ admits positive densities on $ℝ^{d{×}d}$, and that $Ψ_0$ and $Ψ_1$ admits positive densities on $ℝ^d$.
The steps of this proof carry over to the more general measure-theoretic setting.

We know that $f^{(λ)}_{0,1}(x_0,x_1) = ψ_0(x_0)f^{(λ)}_{1|0}(x_1|x_0)$ and $b^{(λ)}_{0,1}(x_0,x_1) = b^{(λ)}_{0|1}(x_0|x_1)ψ_1(x_1)$, where $f^{(λ)}_{1|0}(x_1|x_0) ∝ b'_{1|0}(x_1|x_0)^λ f'_{1|0}(x_1|x_0)^{1-λ}$ and $b^{(λ)}_{0|1}(x_0|x_1) ∝ f'_{0|1}(x_0|x_1)^λ b'_{0|1}(x_0|x_1)^{1-λ}$ (see \cref{sec:bm2_theory}).
On the other hand
\begin{align*}
 & f'_{0,1}(x_0,x_1) = \exp\Big\{ϕ_0^{f'}(x_0) + ϕ_1^{f'}(x_1) - \frac{κ(x_0,x_1)}{ε}\Big\}, \\
 & b'_{0,1}(x_0,x_1) = \exp\Big\{ϕ_0^{b'}(x_0) + ϕ_1^{b'}(x_1) - \frac{κ(x_0,x_1)}{ε}\Big\},
\end{align*}
for the Schrödinger potentials\footnote{We formulate the potential with respect to the Lebesgue measure on $ℝ^d$.} $ϕ_0^{f'}(x_0), ϕ_1^{f'}(x_1)$ and $ϕ_0^{b'}(x_0) + ϕ_1^{b'}(x_1)$ \citep{leonard2014properties}.
It follows by direct computation that $f^{(λ)}_{0,1}(x_0,x_1)$ and $b^{(λ)}_{0,1}(x_0,x_1)$ satisfy:
\begin{align*}
 & f^{(λ)}_{0,1}(x_0,x_1) = \exp\Big\{ϕ_0^{f,λ}(x_0) + ϕ_1^{f,λ}(x_1) - \frac{κ(x_0,x_1)}{ε}\Big\}, \\
 & b^{(λ)}_{0,1}(x_0,x_1) = \exp\Big\{ϕ_0^{b,λ}(x_0) + ϕ_1^{f,λ}(x_1) - \frac{κ(x_0,x_1)}{ε}\Big\},
\end{align*}
for some other Schrödinger potentials $ϕ_0^{f,λ}(x_0), ϕ_1^{f,λ}(x_1)$ and $ϕ_0^{b,λ}(x_0), ϕ_1^{f,λ}(x_1)$.
\end{proof}

\section{Additional Results}\label{sec:additional_results}

\subsection{Infinitesimal Minimization, Gaussian Case}\label{sec:bm2_inf_gauss}

\begin{figure}
\centering
\includegraphics{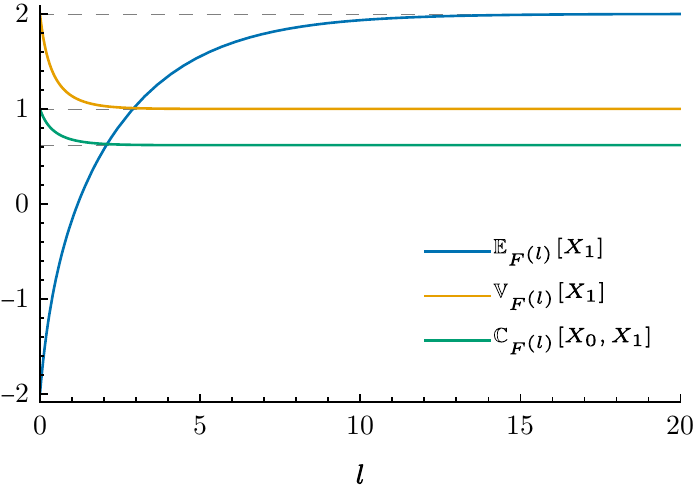}
\caption{Algorithmic-time $l$ evolution of $𝔼_{F^{(l)}}[X_1]$, $𝕍_{F^{(l)}}[X_1]$, $ℂ_{F^{(l)}}[X_0,X_1]$, compared with $𝔼_S[X_1]$, $𝕍_S[X_1]$, $ℂ_S[X_0,X_1]$ as dashed gray lines.}\label{fig:bm2_infmin_gauss_f}
\end{figure}

Consider the one-dimensional case $d=1$, with target Gaussian marginal distributions $Ψ_0 = 𝒩(μ_0,σ^2_0)$ and $Ψ_1 = 𝒩(μ_1,σ^2_1)$, and a reference diffusion distribution $R$ associated with \cref{eq:ref_sde}.
In this setting, the solution to the static Schrödinger bridge problem \cref{eq:sb_static_problem} is known analytically and is given by a bivariate Gaussian distribution \citep{mallasto2022entropyregularized}.

We hypothesize that conditional Gaussian densities for $f^{(l)}_{1|0}(x_1|x_0)$ and $b^{(l)}_{0|1}(x_0|x_1)$ solve \cref{eq:bm2_partial_static_dyn_2}.
Specifically, we propose $F^{(l)}_{1|0} = 𝒩(A^f_l x_0 + a^f_l, v^f_l)$ and $B^{(l)}_{0|1} = 𝒩(A^b_l x_1 + a^b_l, v^b_l)$, where $A^f_l,a^f_l,A^b_l, a^b_l ∈ ℝ$ and $v^f_l,v^b_l∈ℝ_{>0}$ are algorithmic-time dependent scalar parameters.
By construction, $F^{(l)}_{0} = 𝒩(μ_0,σ^2_0)$ and $B^{(l)}_{1} = 𝒩(μ_1,σ^2_1)$ for each $l ≥ 0$.
Substituting these expressions for $f^{(l)}_{1|0}(x_1|x_0)$ and $b^{(l)}_{0|1}(x_0|x_1)$ into \cref{eq:bm2_partial_static_dyn_2} yields a six-dimensional ODE system in the parameters.
The initial conditions are $A^b_0 = A^f_0 = 1$, $a^f_0 = a^b_0 = 0$, $v^f_0 = v^b_0 = σ^2$, corresponding to initial null drift terms for \cref{eq:bm2_f,eq:bm2_b}, as discussed in \cref{sec:bm2_theory}.

To numerically solve the ODE and determine the values of $A^f_l, a^f_l, v^f_l, A^b_l, a^b_l, v^b_l$ over $l ∈ [0,L]$ for some $L > 0$, we evaluate \cref{eq:bm2_partial_static_dyn_2} for three different pairs of $(x_0,x_1)$.
This provides sufficient constraints to identify the parameters.
Subsequently, we verify that the proposed functional forms for $f^{(l)}_{1|0}(x_1|x_0)$ and $b^{(l)}_{0|1}(x_0|x_1)$ indeed solve \cref{eq:bm2_partial_static_dyn_2}.

We examine the scenario where $μ_0=-2, μ_1=2, σ_0=σ_1=σ=1$.
\cref{fig:bm2_infmin_gauss_f} illustrates the evolution of $𝔼_{F^{(l)}}[X_1]$, $𝕍_{F^{(l)}}[X_1]$, and $ℂ_{F^{(l)}}[X_0,X_1]$ over algorithmic time $l$.
These quantities represent the mean and variance of $X_1$ and the covariance between $X_0$ and $X_1$ according to $F^{(l)}$, respectively.
The corresponding values $𝔼_S[X_1]$, $𝕍_S[X_1]$, and $ℂ_S[X_0,X_1]$ for the static Schrödinger bridge solution $S_{0,1}$ from \citet{mallasto2022entropyregularized} are depicted as dashed gray lines, demonstrating convergence.

\subsection{Results for Additional Discretization Intervals}\label{sec:cbw_discretizations}

In \cref{tab:cbw_discretization} we report the results for metric \cref{eq:metric_cbw} obtained by considering the BM$^2$, I-BM and DIPF methodologies for different discretization intervals $Δt = 1/T$ where $T$ is the number of time-steps.
For all methods we employ the same number of time-steps at training and inference (testing) time.
We recall that in \cref{sec:numerical} $200$ time-steps have been employed to produce the results of \cref{tab:kl_results,tab:cbw_results}, and that we rely exclusively on the Euler–Maruyama discretization scheme.

\begin{table}[h!]
\ra{1.2}
\centering
\small
\begin{tabular}{@{}l*{12}r@{}}\toprule
            & \multicolumn{4}{c}{$ε \feq 0.1$} & \multicolumn{4}{c}{$ε \feq 1$} & \multicolumn{4}{c}{$ε \feq 10$}                                                                                                                                                                                     \\
\cmidrule(lr){2-5} \cmidrule(lr){6-9} \cmidrule(l){10-13}
Method      & $d \feq 2$                       & $d \feq 16$                    & $d \feq 64$                     & $d \feq 128$      & $d \feq 2$       & $d \feq 16$      & $d \feq 64$       & $d \feq 128$      & $d \feq 2$       & $d \feq 16$       & $d \feq 64$        & $d \feq 128$        \\
\midrule
SB(100)     & \fms{0.08}{0.00}                 & \fms{0.10}{0.00}               & \fms{0.37}{0.00}                & \fms{0.94}{0.00}  & \fms{0.10}{0.00} & \fms{0.19}{0.00} & \fms{0.49}{0.00}  & \fms{1.25}{0.00}  & \fms{0.12}{0.00} & \fms{2.96}{0.00}  & \fms{14.86}{0.02}  & \fms{28.90}{0.03}   \\
SB(50)      & \fms{0.23}{0.00}                 & \fms{0.22}{0.00}               & \fms{0.44}{0.00}                & \fms{1.00}{0.00}  & \fms{0.15}{0.00} & \fms{0.24}{0.00} & \fms{0.61}{0.00}  & \fms{1.49}{0.00}  & \fms{0.13}{0.00} & \fms{7.24}{0.01}  & \fms{37.24}{0.04}  & \fms{72.34}{0.06}   \\
\midrule[0.25pt]
BM$^2$(100) & \fms{1.17}{0.79}                 & \fms{5.08}{0.22}               & \fms{7.12}{0.71}                & \fms{7.94}{0.78}  & \fms{0.14}{0.04} & \fms{0.41}{0.03} & \fms{1.71}{0.14}  & \fms{8.91}{0.35}  & \fms{0.15}{0.02} & \fms{4.06}{0.03}  & \fms{55.10}{1.13}  & \fms{295.82}{10.60} \\
BM$^2$(50)  & \fms{0.80}{0.52}                 & \fms{4.59}{0.41}               & \fms{6.46}{0.87}                & \fms{8.47}{0.73}  & \fms{0.21}{0.09} & \fms{0.48}{0.03} & \fms{1.85}{0.10}  & \fms{9.57}{0.27}  & \fms{0.16}{0.01} & \fms{8.17}{0.04}  & \fms{78.76}{1.85}  & \fms{202.22}{41.38} \\
\midrule[0.25pt]
DIPF(100)   & \fms{7.28}{4.19}                 & \fms{15.72}{1.31}              & \fms{16.44}{0.89}               & \fms{26.69}{0.71} & \fms{1.43}{0.45} & \fms{5.96}{0.32} & \fms{11.98}{0.82} & \fms{22.60}{0.86} & \fms{1.14}{0.07} & \fms{7.36}{0.16}  & \fms{85.42}{2.64}  & \fms{212.19}{1.19}  \\
DIPF(50)    & \fms{6.90}{3.01}                 & \fms{12.37}{1.40}              & \fms{14.98}{2.16}               & \fms{26.91}{2.71} & \fms{1.44}{0.33} & \fms{5.09}{0.43} & \fms{10.73}{0.29} & \fms{20.20}{0.93} & \fms{2.18}{0.06} & \fms{11.32}{0.18} & \fms{75.80}{1.02}  & \fms{226.81}{0.92}  \\
\midrule[0.25pt]
I-BM(100)   & \fms{0.98}{0.22}                 & \fms{4.54}{0.98}               & \fms{6.35}{0.84}                & \fms{13.60}{1.23} & \fms{0.18}{0.04} & \fms{0.51}{0.03} & \fms{2.09}{0.23}  & \fms{7.30}{1.06}  & \fms{0.15}{0.02} & \fms{5.95}{0.10}  & \fms{113.91}{0.93} & \fms{494.32}{3.99}  \\
I-BM(50)    & \fms{1.58}{0.61}                 & \fms{4.21}{0.20}               & \fms{6.35}{1.01}                & \fms{13.75}{2.93} & \fms{0.28}{0.05} & \fms{0.60}{0.04} & \fms{1.96}{0.25}  & \fms{7.31}{1.24}  & \fms{0.16}{0.02} & \fms{9.14}{0.07}  & \fms{114.40}{0.89} & \fms{432.20}{5.82}  \\
\bottomrule
\end{tabular}
\caption{Results analogous to \cref{tab:cbw_results} but for varying discretization intervals $Δt = 1/T$, where the number of time-steps $T$ is indicated in parentheses after each method name.}
\label{tab:cbw_discretization}
\end{table}

\clearpage

\section{Python Code}\label{sec:code}

\begin{listing}[H]
\begin{minted}[linenos,numbersep=5pt,fontsize=\footnotesize,texcomments,frame=lines,framesep=2mm]{python}
# dimensions: B: batch; D: data; T: time\_steps + 1
# required: sample\_0(batch\_dim, device), sample\_1(batch\_dim, device), fwd\_drift\_fn(x, t), bwd\_drift\_fn(x, t)
import torch as th

# sampling from $R_{t|0,1}$ \cref{eq:ref_sde_tdd}: (B, D), (B, D), (B,), () -> (B, D)
def sample_bridge(x_0, x_1, t, sigma):
    B, D = x_0.shape
    mean_t = (1 - t[..., None]) * x_0 + t[..., None] * x_1  # (B, D)
    var_t = sigma**2 * t[..., None] * (1 - t[..., None])    # (B, D)
    z_t = th.randn_like(x_0)                                # (B, D)
    x_t = mean_t + th.sqrt(var_t) * z_t                     # (B, D)
    return x_t

# fwd BM target \cref{eq:μ01}: (B, D), (B, D), (B,) -> (B, D)
def fwd_target(x_t, x_1, t):
    return (x_1 - x_t) / (1 - t[..., None])  # (B, D)

# fwd BM target \cref{eq:υ01}: (B, D), (B, D), (B,) -> (B, D)
def bwd_target(x_t, x_0, t):
    return (x_0 - x_t) / t[..., None]  # (B, D)

# Euler–Maruyama discretization scheme: fn(x, t), (B, D), (T), () -> (B, D)
def discretization(drift_fn, initial_value, times, sigma):
    B, D = initial_value.shape
    times = times[..., None].expand(-1, B)                                   # (T, B)
    x_prev_t = initial_value                                                 # (B, D)
    for prev_t, t in zip(times[:-1], times[1:]):                             # (B), (B)
        dt = t - prev_t                                                      # (B)
        drift_t = drift_fn(x_prev_t, prev_t)                                 # (B, D)
        drift_part_t = drift_t * dt[..., None]                               # (B, D)
        eps_t = th.randn_like(x_prev_t)                                      # (B, D)
        diffusion_part_t = (sigma * th.sqrt(th.abs(dt)))[..., None] * eps_t  # (B, D)
        x_t = x_prev_t + drift_part_t + diffusion_part_t                     # (B, D)
        x_prev_t = x_t                                                       # (B, D)
    return x_t

# BM$^2$ loss computation: fn(b, d), fn(b, d), fn(x, t), fn(x, t), (), (), (), () -> ()
def sample_loss(sample_0, sample_1, fwd_drift_fn, bwd_drift_fn, batch_dim, time_steps, sigma, device):
    # sample from the target marginals:
    f_0 = sample_0(batch_dim, device)                                   # (B, D)
    b_1 = sample_1(batch_dim, device)                                   # (B, D)
    # sample according to current \cref{eq:bm2_f} and \cref{eq:bm2_b}:
    fwd_times = th.linspace(0.0, 1.0, time_steps + 1, device=device)    # [0, 1/time\_steps, ..., 1]
    bwd_times = th.linspace(1.0, 0.0, time_steps + 1, device=device)    # [1, ..., 1/time\_steps, 0]
    f_1 = discretization(fwd_drift_fn, f_0, fwd_times, sigma).detach()  # (B, D)
    b_0 = discretization(bwd_drift_fn, b_1, bwd_times, sigma).detach()  # (B, D)
    # sample time and mixture processes based on $F_{0,1}(θ)$ and $B_{0,1}(θ)$:
    t = th.rand((batch_dim,), device=device)                            # (B)
    pi_f_t = sample_bridge(f_0, f_1, t, sigma)                          # (B, D)
    pi_b_t = sample_bridge(b_0, b_1, t, sigma)                          # (B, D)
    # define regression targets and model predictions:
    target_f_t = fwd_target(pi_b_t, b_1, t)                             # (B, D)
    target_b_t = bwd_target(pi_f_t, f_0, t)                             # (B, D)
    prediction_f_t = fwd_drift_fn(pi_b_t, t)                            # (B, D)
    prediction_b_t = bwd_drift_fn(pi_f_t, t)                            # (B, D)
    # compute loss:
    loss_f_t = th.sum((target_f_t - prediction_f_t)**2, dim=1) / 2      # (B)
    loss_b_t = th.sum((target_b_t - prediction_b_t)**2, dim=1) / 2      # (B)
    loss_t = th.mean(loss_f_t + loss_b_t)                               # ()
    return loss_t
\end{minted}
\caption{Basic implementation of BM$^2$ loss computation (\cref{alg:bm2_obj}) in PyTorch.}\label{code:bm2}
\end{listing}

\end{document}